\algrenewcommand\algorithmicrequire{\textbf{Input:}}
\algrenewcommand\algorithmicensure{\textbf{Output:}}
\newtheorem{example}{Example}
\newtheorem{proposition}{Proposition}
\newtheorem{definition}{Definition}
\newcommand\bcmdtab{\noindent\bgroup\tabcolsep=0pt%
  \begin{tabular}{@{}p{10pc}@{}p{20pc}@{}}}
\newcommand\ecmdtab{\end{tabular}\egroup}
\title[Generating Global and Local Explanations for Tree-Ensemble Learning Methods by ASP]
{Generating Global and Local Explanations for Tree-Ensemble Learning Methods by Answer Set Programming}
\author[A. Takemura and K. Inoue]
    {AKIHIRO TAKEMURA\\
    The Graduate University for Advanced Studies, SOKENDAI, Tokyo, Japan\\
    National Institute of Informatics, 2-1-2 Hitotsubashi, Chiyoda-ku, Tokyo, Japan 101-8430\\
    \email{atakemura@nii.ac.jp}
    \and KATSUMI INOUE\\
    National Institute of Informatics, 2-1-2 Hitotsubashi, Chiyoda-ku, Tokyo, Japan 101-8430\\
    The Graduate University for Advanced Studies, SOKENDAI, Tokyo, Japan\\
    \email{inoue@nii.ac.jp}
    }
\begin{document}

\label{firstpage}

\maketitle
\begin{abstract}
We propose a method for generating rule sets as global and local explanations for tree-ensemble learning methods using Answer Set Programming (ASP).
To this end, we adopt a decompositional approach where the split structures of the base decision trees are exploited in the construction of rules, which in turn are assessed using pattern mining methods encoded in ASP to extract explanatory rules. 
For global explanations, candidate rules are chosen from the entire trained tree-ensemble models, whereas for local explanations, candidate rules are selected by only considering rules that are relevant to the particular predicted instance.
We show how user-defined constraints and preferences can be represented declaratively in ASP to allow for transparent and flexible rule set generation, and how rules can be used as explanations to help the user better understand the models. 
Experimental evaluation with real-world datasets and popular tree-ensemble algorithms demonstrates that our approach is applicable to a wide range of classification tasks.
Under consideration in Theory and Practice of Logic Programming (TPLP).
\end{abstract}

\begin{keywords}
answer set programming, machine learning, explainability, decision trees, rule sets, pattern mining
\end{keywords}

\section{Introduction}

\textit{Interpretability} in machine learning is the ability to explain or to present in understandable terms to a human \cite{doshi-velezRigorousScienceInterpretable2017,millerExplanationArtificialIntelligence2019a,molnarInterpretableMachineLearning2020}. Interpretability is particularly important when, for example, the goal of the user is to gain knowledge from some form of explanations about the data or process through machine learning models, or when making high-stakes decisions based on the outputs from the machine learning models where the user has to be able to trust the models. \textit{Explainability} is another term that is often used interchangeably with interpretability, but some emphasize the ability to produce \textit{post-hoc explanations} for the black-box models \cite{rudinStopExplainingBlack2019}. For convenience, we shall use the term \textit{explanation} when referring to post-hoc explanations in this paper.

In this work,\footnote{Some parts of this paper were presented as a Technical Communications paper \cite{takemura_treetap} at the 37th International Conference on Logic Programming (ICLP 2021). The present paper newly describes a method to produce explanations for \textit{each predicted instance} (local explanation), in addition to the updated ASP encoding for the global explanation method. The experimental section reports new evaluation results of the updated methods on various datasets, including several additional datasets.} we address the problem of explaining trained tree-ensemble models by extracting meaningful rules from them. 
This problem is of practical relevance in business and scientific domains, where the understanding of the behavior of high-performing machine learning models and extraction of knowledge in human-readable form can aid users in the decision-making process. We use {\it Answer Set Programming (ASP)} \cite{gelfondStableModelSemantics1988,lifschitzWhatAnswerSet2008} to generate rule sets from tree-ensembles.
ASP is a declarative programming paradigm for solving difficult search problems. 
An advantage of using ASP is its expressiveness and extensibility, especially when representing constraints. 
To our knowledge, ASP has never been used in the context of rule set generation from tree-ensembles, although it has been used in pattern mining \cite{jarvisaloItemsetMiningChallenge2011a,guyetUsingAnswerSet,DBLP:conf/ijcai/GebserGQ0S16,paramonovHybridASPbasedApproach2019}.

Generating explanations for machine learning models is a challenging task, since it is often necessary to account for multiple competing objectives. 
For instance, if accuracy is the most important metric, then it is in direct conflict with explainability because accuracy favors specialization while explainability favors generalization. Any explanation method should also strive to imitate the behavior of learned models as to minimize misrepresentation of models, which in turn may result in misinterpretation by the user. 
While there are many explanation methods available (some are covered in Section \ref{sec:relatedwork}), we propose to use ASP as a medium to represent the user requirements declaratively and to quickly search feasible solutions for faster prototyping. 
By implementing a rule selection method as a post-processing step to model training, we aim to offer an off-the-shelf objective explanation tool which can be applied to existing processes with minimum modification, as an alternative to subjective manual rule selection.

To demonstrate the adaptability of our approach, we present implementations for both {\it global} and {\it local} explanations of learned  tree-ensemble models using our method. In general, \textit{global explanation} refers to descriptions of how the overall system works (also referred to as \textit{model explanation}), and \textit{local explanation} refers to specific descriptions of why a certain decision was made ({\it outcome explanation}) \cite{guidottiSurveyMethodsExplaining2019}. The global explanations are more useful in situations where the explanations behind the opaque model is needed, for example, when designing systems for faster detection of certain events such as credit issues or illnesses. In contrast, the local explanations are suitable, for example, when explaining the outcome of such systems to its users, since they are more likely to be interested in particular decisions that led to the outcome.

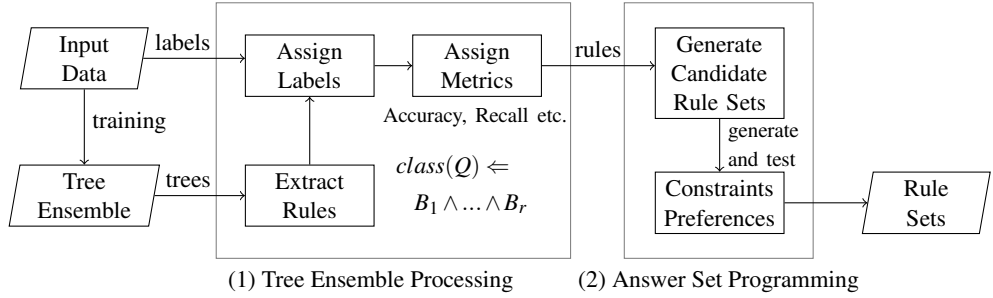
\begin{figure}[tb]
  \centering 
    \begin{tikzpicture}

\usetikzlibrary{shapes.geometric, arrows,backgrounds,fit,positioning}
\tikzset{
input/.style={trapezium, trapezium left angle=80, trapezium right angle=100, 
draw,  text centered, text width=1.2cm, minimum height=0.6cm},
input2/.style={trapezium, trapezium left angle=80, trapezium right angle=100,
draw,  text centered, text width=1.5cm, minimum height=0.6cm},
tp/.style={rectangle,  text centered, text width=7cm, minimum height=0.6cm},
tp1/.style={rectangle,  draw, text centered, text width=1.5cm, minimum height=0.6cm},
rule/.style={rectangle,  draw,  text centered, text width=7cm, minimum height=0.6cm},
asp/.style={rectangle,  draw,  text centered, text width=7cm, minimum height=0.6cm},
outputrule/.style={trapezium, trapezium left angle=60, trapezium right angle=120, draw,  text centered, text width=5cm, minimum height=0.6cm},
};

\node[input] (a1) {\small Input Data};
\node[input2, below=1cm of a1] (a2) {\small Tree Ensemble};
\draw[->] (a1) -- (a2) node[midway,above,xshift=.6cm,yshift=-.2cm]{\small training};

\node[tp1, right=1.2cm of a2,] (b1) {\small Extract Rules};
\node[right=.05cm of b1, yshift=.3cm, text width=2.cm] (b1t) {\small \begin{align*}
    cl&ass(Q) \Leftarrow\\ & B_1 \wedge ... \wedge  B_r
\end{align*}};
\node[tp1, right=1.5cm of a1.north, anchor=north, above=.9 of b1] (b2) {\small Assign Labels};
\node[tp1, right=.5cm of b2] (b3) {\small Assign Metrics};
\node[below=.03cm of b3, text width=2.5cm] (b3t) {\footnotesize Accuracy, Recall etc.};
\node[fit=(b1) (b2) (b3), draw=gray, text width=4.5cm,  minimum height=3.4cm] (teep) {};
\node[below=.05cm of teep, xshift=-0.3cm] (b) {\small (1) Tree Ensemble Processing};

\draw[->] (a1.east) -- (a1.east -| b2.west) node[midway,above,xshift=-.15cm]{\small labels};
\draw[->] (a2) -- (b1) node[midway,above,xshift=-.15cm]{\small trees};
\draw[->] (b1) -- (b2);
\draw[->] (b2) -- (b3);

\node[tp1, right=1.5cm of b3, yshift=-.1cm] (c1) {\small Generate Candidate Rule Sets};
\node[tp1, below=.7cm of c1] (c2) {\small Constraints Preferences};
\node[fit=(c1) (c2), draw=gray, text width=2.3cm, minimum height=3.4cm, right=.7cm of teep] (aspe) {};
\node[below=.05cm of aspe] (c) {\small (2) Answer Set Programming};

\draw[->] (b3.east) -- (b3.east -| c1.west) node[midway,above,]{\small rules};
\draw[->] (c1.south) -- (c2.north) node[midway,right,text width=1.3cm]{\footnotesize generate and test};

\node[input, right=1.1cm of c2] (d1) {\small Rule Sets};
\draw[->] (c2) -- (d1);

\end{tikzpicture}
  \caption{Overview of our framework}
  \label{fig:pipeline}
\end{figure}

We consider the two-step procedure for rule set generation from trained tree-ensemble models (Figure \ref{fig:pipeline}): (1) extracting rules from tree-ensembles, and (2) computing sets of rules according to selection criteria and preferences encoded declaratively in ASP. 
For the first step, we employ the efficiency and prediction capability of modern tree-ensemble algorithms in finding useful feature partitions for prediction from data. 
For the second step, we exploit the expressiveness of ASP in encoding constraints and preference to select useful rules from tree-ensembles, and rule selection is automated through a declarative encoding. 
In the end, we obtain the generated rule sets which serve as explanations for the tree-ensemble models, providing insights into their behavior. 
These aim to mimic the models' behavior rather than offering exhaustive and formally correct explanations, thus aligning with \textit{heuristic-based explanation methods} in the sense of e.g., \cite{izzaExplainingRandomForests2021,ignatievUsingMaxSATEfficient2022,audemardExplanatoryPowerBoolean2022}.

We then evaluate our approach using public datasets. For evaluating global explanations, we use the number and relevance of rules in the rule sets. The number of rules is often associated with explainability, with many rules being less desirable. Performance metrics such as classification accuracy, precision and recall can be used as a measure of relevance of the rules to the prediction task. For evaluating local explanations, we use precision and coverage metrics to compare against existing systems.

This paper makes the following contributions:
\begin{itemize}
    \item We present a novel application of Answer Set Programming (ASP) for explaining trained machine learning models. We propose a method to generate explainable rule sets from tree-ensemble models with ASP. More broadly, this work contributes to the growing body of knowledge on integrating symbolic reasoning with machine learning.
    \item We present how the rule set generation problem can be reformulated as an optimization problem, where we leverage existing knowledge on declarative pattern mining with ASP.
    \item We show how both global and local explanations can be generated by our approach, while comparative methods tend to focus on either one exclusively.
    \item To demonstrate the practical applicability of our approach, we provide both qualitative and quantitative results from evaluations with public datasets, where machine learning methods are used in a realistic setting.
\end{itemize}

The rest of this paper is organized as follows. 
In Section \ref{sec:background}, we review tree-ensembles, ASP and pattern mining. 
Section \ref{sec:rulesetgeneration} presents our method to generate rule sets from tree-ensembles using pattern mining and optimization encoded in ASP. 
Section \ref{sec:rulesetgenerationforglobal} describes global and local explanations in the context of our approach. 
Section \ref{sec:experiments} presents experimental results on public datasets. 
In Section \ref{sec:relatedwork} we review and discuss related works. 
Finally, in Section \ref{sec:conclusion} we present the conclusions.

\section{Background}\label{sec:background}

In the remainder of this paper, we shall use \textit{learning algorithms} to refer to methods used to train \textit{models}, as in machine learning literature. We use \textit{models} and \textit{explanations} to refer to machine learning models and post-hoc explanations about the said models, respectively. 

\subsection{Tree-Ensemble Learning Algorithms}

\textit{Tree-Ensemble (TE)} learning algorithms are machine learning methods widely used in practice, typically, when learning from tabular datasets. A trained TE model consists of multiple base decision trees, each trained on an independent subset of the input data. For example, Random Forests \cite{breimanRandomForests2001} and Gradient Boosted Decision Tree (GBDT) \cite{friedmanGreedyFunctionApproximation2001} are tree-ensemble learning algorithms. Recent surge of efficient and effective GBDT algorithms, e.g., LightGBM \cite{keLightGBMHighlyEfficient2017}, has led to wide adoption of TE learning algorithms in practice. Although individual decision trees are considered to be interpretable \cite{huysmansEmpiricalEvaluationComprehensibility2011}, ensembles of decision trees are seen as less interpretable.

The purpose of using TE learning algorithms is to train models that predict the unknown value of an attribute \(y\) in the dataset, referred to as \textit{labels}, using the known values of other attributes \(\mathbf{x}=(x_1,x_2,...,x_m)\), referred to as \(\textit{features}\). For brevity, we restrict our discussion to classification problems. During the training or learning phase, each input instance to the TE learning algorithm is a pair of features and labels, i.e. \((\mathbf{x}_i, y_i)\), where \(i\) denotes the instance index, and during the prediction phase, each input instance only includes features, \((\mathbf{x}_i)\), and the model is tasked to produce predictions \(\hat{y}_i\). A collection of input instances, complete with features and labels, is referred to as a \textit{dataset}. 
Given a dataset \(\mathcal{D}=\{(\mathbf{x}_i, y_i)\}\) with \(n\in\mathbb{N}\) examples and \(m\in\mathbb{N}\) features, a decision tree classifier \(t\) will predict the class label \(\hat{y}_i\) based on the feature vector \(\mathbf{x}_i\) of the \(i\)-th sample: \(\hat{y}_i = t(\mathbf{x}_i)\). A tree-ensemble \(\mathcal{T}\) uses \(K\in\mathbb{N}\) trees and additionally an aggregation function \(f\) over the \(K\) trees which combines the output from the trees: \(\hat{y}_i = f(t_{k\in K}(\mathbf{x}_i))\). As for Random Forest, for example, \(f\) is a majority voting scheme (i.e. \texttt{argmax} of \texttt{sum}), and in GBDT \(f\) may be a summation followed by softmax to obtain \(\hat{y}_i\) in terms of probabilities.

In this paper, a decision tree is assumed to be a binary tree where the internal nodes hold split conditions (e.g., \(x_1 \leq 0.5\)) and leaf nodes hold information related to class labels, such as the number of supporting data points per class label that have been assigned to the leaf nodes.
Richer collections of decision trees provide higher performance and less uncertainty in prediction compared to a single decision tree. Typically, each TE model has specific algorithms for learning base decision trees, adding more trees and combining outputs from the base trees to produce the final prediction. In GBDT, the base trees are trained sequentially by fitting the residual errors from the previous step. Interested readers are referred to \cite{friedmanGreedyFunctionApproximation2001}, and its more recent implementations, LightGBM \cite{keLightGBMHighlyEfficient2017} and XGBoost \cite{chenXGBoostScalableTree2016}.

\subsection{Answer Set Programming}
\textit{Answer Set Programming} \cite{lifschitzWhatAnswerSet2008} has its roots in logic programming and non-monotonic reasoning. A {\it normal logic program} is a set of rules of the form
\[\mathrm{a_1} \ \text{:-} \ \ \mathrm{a_2},\ \dots, \ \mathrm{a_m}, \ \mathrm{not} \ \mathrm{a_{m+1}}, \ \dots, \ \mathrm{not} \  \mathrm{a_n}.\]
where each \(\mathrm{a_i}\) is a first-order atom with \(1 \leq \mathrm{i} \leq \mathrm{n}\) and \texttt{not} is {\it default negation}. If only \(\mathrm{a_1}\) is included (\(\mathrm{n} = 1\)), the above rule is called a {\it fact}, whereas if \(\mathrm{a_1}\) is omitted, it represents an {\it integrity constraint}. A normal logic program induces a collection of intended interpretations, which are called \textit{answer sets}, defined by the stable model semantics \cite{gelfondStableModelSemantics1988}. 
Additionally, in modern ASP systems, constructs such as {\it conditional literals} and {\it cardinality constraints} are supported. The former in \textit{clingo} \cite{DBLP:journals/corr/GebserKKS14} is written in the form \(\{ \mathrm{a}(\texttt{X}) \ \text{:} \ \mathrm{b}(\texttt{X}) \}\),\footnote{Unless otherwise noted, we follow the Prolog-style notation in logic programs where strings beginning with a capital letter are variables, and others are predicate symbols or constants.} and expanded into the conjunction of all instances of \(\mathrm{a}(\texttt{X})\) where corresponding \(\mathrm{b}(\texttt{X})\) holds. 
The latter are written in the form \(s_1 \ \{ \mathrm{a}(\texttt{X}) \ \text{:} \ \mathrm{b}(\texttt{X}) \} \ s_2 \), which is interpreted as \(s_1 \leq \texttt{\#count} \{ \mathrm{a}(\texttt{X}) \ \text{:} \ \mathrm{b}(\texttt{X}) \} \leq s_2 \) where \(s_1\) and \(s_2\) are treated as lower and upper bounds, respectively; thus the statement holds when the count of instances \(\mathrm{a}(\texttt{X})\) where \(\mathrm{b}(\texttt{X})\) holds, is between \(s_1\) and \(s_2\). The minimization (or maximization) of an objective function can be expressed with \(\texttt{\#minimize}\) (or  \(\texttt{\#maximize}\)). 
Similarly to the \texttt{\#count} aggregate, the \texttt{\#sum} aggregate sums the first element (weight) of the terms, while also following the set property.
\textit{clingo} supports multiple optimization statements in a single program, and one can implement multi-objective optimization with priorities by defining two or more optimization statements.
For more details on the language of \textit{clingo}, we refer the reader to the \textit{clingo} manual.\footnote{\url{https://github.com/potassco/guide/releases/}}

\subsection{Pattern Mining}
In a general setting, the goal of pattern mining is to find interesting patterns from data, where patterns can be, for example, itemsets, sequences and graphs. For example, in \textit{frequent itemset mining} \cite{agrawalFastAlgorithmsMining1994}, the task is to find all subsets of items that occur together more than the threshold count in databases.
In this work, a \textit{pattern} is a set of predictive rules. 
A \textit{predictive rule} has the form \( c \Leftarrow s_1 \wedge s_2 \wedge , ..., s_n \), where \(c\) is a class label, and \(\{s_i\}\) (\(1 \leq i \leq n\)) represents conditions. 

For pattern mining with constraints, the notion of \textit{dominance} is important, which intuitively reflects the pairwise preference relation \((<^*)\) between patterns \cite{negrevergneDominanceProgrammingItemset2013}. Let \(C\) be a constraint function that maps a pattern to \(\{\top, \bot\}\), and let \(p\) be a pattern, then the pattern \(p\) is \textit{valid} iff  \(C(p)=\top\), otherwise it is \textit{invalid}. An example of \(C\) is a function which checks that the support of a pattern is above the threshold. The pattern \(p\) is said to be \textit{dominated} iff there exists a pattern \(q\) such that \(p <^* q\) and \(q\) is valid under \(C\). Dominance relations have been used in ASP encoding for pattern mining \cite{paramonovHybridASPbasedApproach2019}.


\section{Rule Set Generation}\label{sec:rulesetgeneration}

\subsection{Problem Statement}\label{sec:problemstatement}
The rule set generation problem is represented as a tuple \(P=\{R,M,C,O\}\), where \(R\) is the set of all rules extracted from the tree-ensemble, \(M\) is the set of meta-data and properties associated with each rule in \(R\), \(C\) is the set of user-defined constraints including preferences, and \(O\) is the set of optimization objectives. 
The goal is to generate a set of rules from \(R\) by selection under constraints \(C\) and optimization objectives \(O\), where constraints and optimization may refer to the meta-data \(M\). In the following sections, we describe how we construct each \(R\), \(M\), \(C\) and \(O\), and finally, how we solve this problem with ASP.

\subsection{Rule Extraction from Decision Trees}\label{sec:ruleextraction}

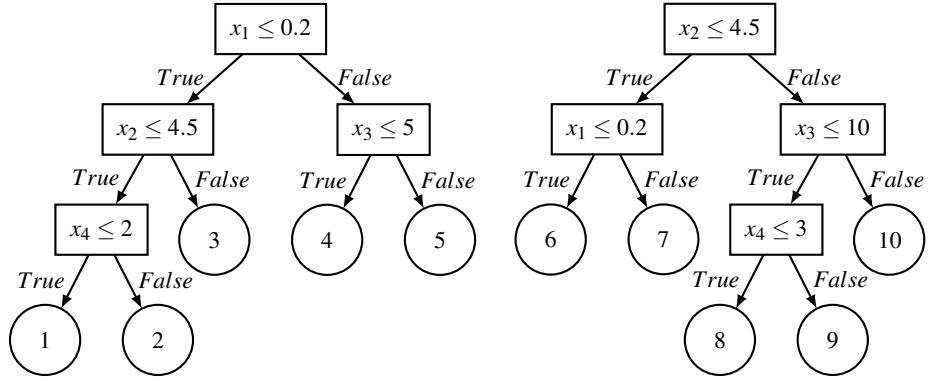
\begin{figure}[tb]
  \centering 

\begin{tikzpicture}

\tikzstyle{level 1}=[level distance=1cm, sibling distance=3cm]
\tikzstyle{level 2}=[level distance=1cm, sibling distance=1.5cm]
\tikzstyle{level 3}=[level distance=1cm, sibling distance=1.5cm]
\tikzstyle{level 4}=[level distance=1cm, sibling distance=1cm]

\tikzset{decision/.style={
draw=black, solid, rectangle, 
anchor=north,
inner sep=2mm, outer sep=0, text centered, growth parent anchor=south} 
}
\tikzset{prediction/.style={
draw=black, solid, circle, 
anchor=north,
text width=.3cm,
inner sep=2mm, outer sep=0, text centered, growth parent anchor=south} 
}
\tikzset{arrow/.style={-latex, thick}}
\tikzset{arrow-dashed/.style={-latex, dashed, thick}}

\node [decision] (t1) {$ x_1 \leq 0.2 $} [solid, -latex, thick]
  child { 
    node [decision] {$ x_2 \leq 4.5$}
      child {
        node [decision] {$x_4 \leq 2$} [solid, -latex, thick]
          child {
            node [prediction] {1} [solid, -latex, thick]
          edge from parent node [left,above,xshift=-0.45cm,yshift=-0.2cm] {$ True $}}
          child {
            node [prediction] {2} [solid, -latex, thick]
          edge from parent node [right,above,xshift=0.5cm,yshift=-0.2cm] {$ False $}}
      edge from parent node [left,above,xshift=-0.45cm,yshift=-0.2cm] {$ True $} }
      child {
        node [prediction] {3} [solid, -latex, thick]
      edge from parent node [right,above,xshift=+0.5cm,yshift=-0.2cm] {$ False $}}
  edge from parent node [left,above,xshift=-0.45cm,yshift=-0.2cm] {$ True $} } 
  child {
    node [decision] {$x_3 \leq 5$}
      child {
        node [prediction] {4}
      edge from parent node [left,above,xshift=-0.45cm,yshift=-0.2cm] {$ True $}}
      child {
        node [prediction] {5} [solid, -latex, thick]
      edge from parent node [right,above,xshift=+0.5cm,yshift=-0.2cm] {$ False $}}
  edge from parent node [right,above,xshift=+0.5cm,yshift=-0.2cm] {$ False $}}
;

\node [decision, right=4.5cm of t1] {$ x_2 \leq 4.5 $} [solid, -latex, thick]
  child { 
    node [decision] {$ x_1 \leq 0.2$}
      child {
        node [prediction] {6}
        edge from parent node [right,above,xshift=-0.45cm,yshift=-0.2cm] {$ True $}
        }
      child {
        node [prediction] {7} [solid, -latex, thick]
      edge from parent node [right,above,xshift=+0.5cm,yshift=-0.2cm] {$ False $}}
  edge from parent node [left,above,xshift=-0.45cm,yshift=-0.2cm] {$ True $} } 
  child {
    node [decision] {$x_3 \leq 10$}
        child {
        node [decision] {$x_4 \leq 3$} [solid, -latex, thick]
          child {
            node [prediction] {8} [solid, -latex, thick]
          edge from parent node [left,above,xshift=-0.45cm,yshift=-0.2cm] {$ True $}}
          child {
            node [prediction] {9} [solid, -latex, thick]
            edge from parent node [right,above,xshift=0.5cm,yshift=-0.2cm] {$ False $}}
          edge from parent node [left,above,xshift=-0.45cm,yshift=-0.2cm] {$ True $}}
      child {
        node [prediction] {10} [solid, -latex, thick]
      edge from parent node [right,above,xshift=+0.5cm,yshift=-0.2cm] {$ False $}}
  edge from parent node [right,above,xshift=+0.5cm,yshift=-0.2cm] {$ False $}}
;
\end{tikzpicture}
  \caption{A simple decision tree-ensemble consisting of two decision trees. The rule associated with each node is given by the conjunction of all conditions associated with nodes on the paths from the root node to that node.}
  \label{fig:simpletree}
\end{figure}

This subsection describes how \(R\), the set of all rules, is constructed. The first two steps in ``tree-ensemble processing" in Figure \ref{fig:pipeline} are also described in this subsection.
Recall that a tree-ensemble \(\mathcal{T}\) is a collection of \(K\) decision trees, and we refer to individual trees \(t_k\) with subscript \(k\). An example of a decision tree-ensemble is shown in Figure \ref{fig:simpletree}. A decision tree \(t_k\) has \(N_{t_k}\) nodes and \(L_{t_k}\) leaves. Each node represents a split condition, and there are \(L_{t_k}\) paths from the root node to the leaves. For simplicity, we assume only features that have orderable values (continuous features) are present in the dataset in the examples below.\footnote{Real datasets may have unorderable categorical values. For example, in the \textit{census} dataset, occupation (Sales, etc.) and education (Bachelors, etc.) are categorical features. Support for categorical feature split is implementation-dependent, however in general one can replace the continuous split with a subset selection, e.g., \(x_c \in \{x_{c1}, x_{c2},...\}\).} The tree on the left in Figure \ref{fig:simpletree} has 4 internal nodes including the root node with condition \([x_1 \leq 0.2]\) and 5 leaf nodes; therefore there are 5 paths from the root note to the leaf nodes 1 to 5. 

From the left-most path of the decision tree on the left in Figure \ref{fig:simpletree}, the following prediction rule is created. We assume that node 1 predicts class label 1 in this instance.\footnote{Label=1 and 0 refer to the attributes in the dataset and have different meaning depending on the dataset. For example, in the \textit{census} dataset, label=1 and 0 mean that the personal income is more than \$50,000 and that it is no more than \$50,000, respectively. In a binary classification setting, the task would be to predict whether the personal income is greater than \$50,000 (\(>\)\$50,000).}
\begin{equation} \label{eqn:ex_class1}
    class(1) \Leftarrow (x_1 \leq 0.2) \wedge (x_2 \leq 4.5) \wedge (x_4 \leq 2)
\end{equation}
Assuming that node 2 predicts class label 0, we also construct the following rule (note the reversal of the condition on \(x_4\)):
\begin{equation} \label{eqn:ex_class2}
    class(0) \Leftarrow (x_1 \leq 0.2) \wedge (x_2 \leq 4.5) \wedge (x_4 > 2)
\end{equation}

\begin{algorithm}[tb]
\caption{Construct candidate rule set \(R\)}
\label{alg:extract_rules}
\begin{algorithmic}[1] 

\Require Tree-ensemble of \(K\) trees, \(\mathcal{D}\) dataset 
\Ensure \(R\) candidate rule set 
\Function {DecomposeTreeEnsemble}{tree-ensemble}
\State \(R\) $\gets$ \(\varnothing\)
\For {tree in tree-ensemble}
    \State \(R\) $\gets$ \(R \; \cup\) \Call{ExtractRulesFromTree}{tree, $ \mathcal{D} $ }
\EndFor
\State \Return \(R\)
\EndFunction
\item[]

\Require \(k\)-th tree in the tree-ensemble, \(\mathcal{D}\) dataset
\Ensure \(R_k\) candidate rule set from the \(k\)-th tree 
\Function {ExtractRulesFromTree}{tree}
\State \(R_k\) $\gets$ \(\varnothing\)
\State paths $\gets$ Enumerate all paths to the leaf nodes
\For {path in paths}
    \State \(B_k\) $\gets$ \(\varnothing\) \Comment{Body of the rule, which is a set of split conditions}
    \State \textit{class} $\gets$ null
    \For {node in path}
        \If {node is left child}
            \State \(B_k\) $\gets$ \(B_k \; \cup \) (feature $ \leq $ threshold)
        \Else
            \State \(B_k\) $\gets$ \(B_k \; \cup \) (feature $ > $ threshold)
        \EndIf
    \EndFor
    \State \textit{class} $\gets$ Apply \(B_k\) to \(\mathcal{D}\), and choose the class with the largest instance count
    \State \textit{rule} $\gets$ \textit{class} $ \Leftarrow  B_k $ \Comment{Construct a rule, as in Section \ref{sec:ruleextraction}}
    \State \(R_k\) $\gets$  \(R_k \; \cup \) \textit{rule} \Comment{Add to the set for \(k\)-th tree}
\EndFor
\State \Return \(R_k\)
\EndFunction

\end{algorithmic}
\end{algorithm}

To obtain the candidate rule set, we essentially decompose a tree-ensemble into a rule set. The steps are outlined in Algorithm \ref{alg:extract_rules}.
By constructing the candidate rule set \(R\) in this way, the bodies (antecedents) of rules included in rule sets are guaranteed to exist in at least one of the trees in the tree-ensemble. 
Rule sets generated in this manner are therefore faithful to the representation of the original model in this sense. 
If we were to construct rules from the unique set of split conditions, the resulting rule may have combinations of conditions that do not exist in any of the trees.



We now analyze the computational complexities associated with constructing the set of all rules \(R\).
Let us assume that (1) all \(K\) trees in the ensemble are perfect binary decision trees and have the same height \(h\), (2) there are \(n\) examples and \(m\) features in the dataset, and (3) there are no duplicate rules and conditions across trees.
\begin{proposition}\label{prop:candidateruleset}
The maximum size of \(R\), constructed by only considering the rules at the leaf nodes, is \(K \times 2^h\).
\end{proposition}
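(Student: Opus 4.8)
The plan is to prove the bound by a direct counting argument, combining a standard fact about perfect binary trees with the structure of the extraction procedure in Algorithm~\ref{alg:extract_rules}. First I would establish that a single perfect binary decision tree of height $h$ has exactly $2^h$ leaf nodes. This is the textbook count: the number of nodes at depth $d$ in a perfect binary tree is $2^d$, and all leaves lie at depth $h$; equivalently one argues by induction on $h$, with a single leaf when $h=0$ and a doubling of the leaf count each time the height increases by one.

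Next I would connect this leaf count to the number of rules produced per tree. Inspecting \textsc{ExtractRulesFromTree}, each root-to-leaf path is enumerated exactly once, and each such path yields precisely one rule: its body is the conjunction of the split conditions encountered along the path, and its head is the majority class at the terminal leaf. Since root-to-leaf paths are in bijection with leaves, a single tree $t_k$ contributes exactly $2^h$ rules under assumption~(1). I would also note that assumption~(2) on $n$ and $m$ plays no role here and is only relevant to the later cost analysis.

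Finally I would aggregate over the ensemble. Because $R$ is the union $\bigcup_{k=1}^{K} R_k$ of the per-tree rule sets and assumption~(3) forbids any rule from being shared between two trees, this union is disjoint, so its cardinality is the sum of the individual cardinalities, yielding $|R| = \sum_{k=1}^{K} 2^h = K \times 2^h$. The qualifier ``maximum'' is what makes the statement precise: since $R$ is a set, any duplicate rule across trees would coincide under the union and strictly lower $|R|$, so $K \times 2^h$ is attained exactly under assumption~(3) and is an upper bound otherwise.

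I do not anticipate a serious obstacle, as the argument is purely combinatorial. The only point needing a line of care is the claim that distinct leaves within one tree give distinct rules, so that the per-tree count is genuinely $2^h$ and not fewer; this holds because any two distinct root-to-leaf paths diverge at some internal node and therefore carry opposite conditions on that node's splitting feature, making the two rule bodies syntactically different.
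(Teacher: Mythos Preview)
Your proposal is correct and follows essentially the same approach as the paper, which simply notes that the bound follows immediately from the fact that a perfect binary tree of height $h$ has $2^h$ leaves. Your write-up is more detailed than the paper's one-line justification---in particular, your observations about the bijection between root-to-leaf paths and rules, and about distinct leaves yielding syntactically distinct rule bodies, make explicit what the paper leaves implicit---but the underlying argument is identical.
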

This follows immediately from the number of leaf nodes in a perfect binary decision tree with height \(h\), i.e., \(2^h\).
In practice, there are duplicate split conditions across trees in a tree-ensemble, so the unique count of rules is often smaller than the maximum value.
\begin{proposition}
The time complexity of the proposed method to construct \(R\) is \(O(K \times (2^h \times n \times h))\).
\end{proposition}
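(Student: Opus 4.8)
The plan is to read off the time complexity directly from the nested-loop structure of Algorithm~\ref{alg:extract_rules}, bounding the work at each level and multiplying through. First I would note that \textsc{DecomposeTreeEnsemble} simply iterates once over each of the $K$ trees and unions the per-tree rule sets, so the total running time is $K$ times the cost of a single call to \textsc{ExtractRulesFromTree}. This accounts for the leading factor of $K$ and reduces the problem to analyzing one perfect binary tree of height $h$.

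Next I would bound a single invocation of \textsc{ExtractRulesFromTree}. By Proposition~\ref{prop:candidateruleset} applied to one tree, the number of root-to-leaf paths equals the number of leaves, which is $2^h$, so the loop over paths runs $2^h$ times. Each iteration does two things. Building the body $B_k$ requires walking the path, which passes through exactly $h$ internal nodes in a height-$h$ tree, costing $O(h)$. The class-assignment step then applies $B_k$ to the dataset $\mathcal{D}$: each of the $n$ examples is tested against all $h$ conditions in $B_k$, costing $O(n \times h)$, and selecting the majority class from the resulting per-class counts adds only $O(n)$, which is absorbed. The per-path cost is therefore $O(h) + O(n \times h) = O(n \times h)$.

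Multiplying the three factors — $O(n \times h)$ per path, $2^h$ paths per tree, and $K$ trees — yields $O(K \times (2^h \times n \times h))$, as claimed. The step I expect to be the main obstacle is justifying that the dataset scan inside each leaf's construction is genuinely the bottleneck and that it carries the extra factor of $h$: one must recognize that evaluating a rule body is not constant-time but linear in its length $h$, and that the majority-vote aggregation does not dominate it. Note also that the number of features $m$ does not enter, since each condition is checked in $O(1)$ and a path contains $h$ rather than $m$ conditions.
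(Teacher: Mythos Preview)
Your proposal is correct and follows essentially the same argument as the paper: count $2^h$ root-to-leaf rules per tree, observe that applying each rule's (at most) $h$ conditions to all $n$ examples costs $O(n \times h)$, and multiply by the $K$ trees. Your version is more detailed---you explicitly separate the $O(h)$ body-construction cost and the majority-vote aggregation and argue they are absorbed---but the core reasoning is identical to the paper's.
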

\begin{proof}
For each rule in \(O(2^h)\) rules, all conditions in the rule need to be applied to the data. Since there are at most \(h\) conditions in a rule, and there are \(n\) examples, it takes \(O(n \times h)\) time to apply all conditions in a rule.
\end{proof}


\subsection{Computing Metrics and Meta-data for Selection}\label{sec:metrics}
After the candidate rule set \(R\) is constructed, we gather information about the performance and properties of each rule and collect them into a set \(M\).
This is the last step in the tree-ensemble processing process depicted in Figure \ref{fig:pipeline} (``Assign Metrics").
The meta-data, or properties, of a rule are information such as the size of the rule, as defined by the number of conditions in the rule, and the ratio of instances which are covered by the rule.
Computing classification metrics, e.g., accuracy and precision, requires access to a subset of the dataset with ground truth labels, which could be either a training or a validation set.
On the other hand, when access to the labeled subset is not available at runtime, these metrics and their corresponding predicates cannot be used in the ASP encoding. 
In our experiments, we used the training sets to compute these classification metrics during rule set generation, and later used the validation sets to evaluate their performance.

Performance metrics measure how well a rule can predict class labels. Here we calculate the following performance metrics: accuracy, precision, recall and F1-score, as shown below. 
\begin{align}
    \textit{accuracy} &= \frac{TP + TN}{TP + TN + FP + FN} \qquad & \textit{precision} &= \frac{TP}{TP+FP} \nonumber \\
    \textit{recall}   &= \frac{TP}{TP+FN}                   \qquad & \textit{F1-score} &= 2 \times \frac{precision \times recall}{precision + recall}
\end{align}
For classification tasks, a \textit{true positive (TP)} and a \textit{true negative (TN)} refer to a correctly predicted positive class and negative class, respectively, with respect to the labels in the dataset.
Conversely, a \textit{false positive (FP)} and a \textit{false negative (FN)} refer to an incorrectly predicted positive class and negative class, respectively.
These metrics are not specific to the rules, and can be computed for trained tree-ensemble models, as well as explanations of trained machine learning models, as we shall show later in Section \ref{sec:fidelityexperiments}.
We compute multiple metrics for a single rule, to meet a range of user requirements for explanation. 
One user may only be interested in simply the most accurate rules (maximize accuracy), whereas another user could be interested in more precise rules (maximize precision), or rules with more balanced performance (maximize F1-score). 


\begin{table}
    \caption{List of predicates representing a rule in ASP.}
    \label{tab:asp_predicates}
    \begin{minipage}{\textwidth}
    \begin{tabular}{ll}
    \hline\hline
    Predicate & Meaning\footnote{Properties and metrics marked with asterisks(*) are multiplied by 100 and rounded to the nearest integer.} \\
    \hline
    \small{\texttt{rule(X)}}              & \texttt{X} holds the rule index. \\
    \small{\texttt{condition(X,I)}}       & Rule \texttt{X} has condition \texttt{I}. \\
    \small{\texttt{size(X,L)}}            & Number of conditions in rule \texttt{X} (length, \texttt{L}). \\
    \small{\texttt{predict\_class(X,C)}}  & Predicted class label \texttt{C} of rule \texttt{X}.\\
    \small{\texttt{support(X,S)}}         & Support \texttt{S} of rule \texttt{X}, the ratio of instances that are covered by rule \texttt{X}.*\\
    \small{\texttt{error\_rate(X,E)}}     & Error rate (\(1 - accuracy\)), \texttt{E}, of the rule \texttt{X} evaluated in the training data.*\\
    \small{\texttt{accuracy(X,A)}}        & Accuracy score \texttt{A} of rule \texttt{X}.*\\
    \small{\texttt{precision(X,P)}}       & Precision score \texttt{P}  of rule \texttt{X}.*\\
    \small{\texttt{recall(X,R)}}          & Recall score \texttt{R}  of rule \texttt{X}.*\\
    \small{\texttt{f1\_score(X,F)}}       & F1-score \texttt{F} of rule \texttt{X}.*\\
    \hline\hline
    \end{tabular}
    \vspace{-2\baselineskip}
    \end{minipage}
\end{table}

The candidate rule set \(R\) and meta-data set \(M\) are represented as facts in ASP, as shown in Table \ref{tab:asp_predicates}. 
For example, Rule \ref{eqn:ex_class1} (the first rule in Section \ref{sec:ruleextraction}) may be represented as follows:\footnote{The performance metrics are for illustration purposes only and are chosen arbitrarily.}
\begin{Verbatim}[frame=single,fontsize=\small]
% rule 1
rule(1). condition(1,1). condition(1,2). condition(1,3). support(1,10).
size(1,3). accuracy(1,50). error_rate(1,50). precision(1,30). 
recall(1,40). f1_score(1,34). predict_class(1,1).
\end{Verbatim}
Unique conditions are indexed and denoted by the condition predicate. For instance, in the example above (representing Rule \ref{eqn:ex_class1}), ``\texttt{condition(1,1)}" represents \((x_1 \leq 0.2)\), ``\texttt{condition(1,2)}" corresponds to \((x_2 \leq 4.5)\), and so forth.

\subsection{Encoding Inclusion Criteria and Constraints}\label{sec:encodingconstraints}
As with previous works in pattern mining in ASP, we follow the ``generate-and-test" approach, where a set of potential solutions are generated by a choice rule and subsequently constraints are used to filter out unacceptable candidates. In the context of rule set generation, we use a choice rule to generate candidate rule sets that may constitute a solution (``Generate Candidate Rule Sets" in Figure \ref{fig:pipeline}).
In this section, we introduce the following selection criteria and constraints: (1) individual rule selection criteria that are applied on a per-rule basis, (2) pairwise constraints that are applied to pairs of rules, and (3) collective constraints that are applied to a set of rules.

The ``generator" choice rule has the following form:
\begin{Verbatim}[frame=single,fontsize=\small]
% pick at least 1 rule and at maximum B rules for each class K.
1 { selected(X) :  predict_class(X, K), valid(X) } B :- class(K).
\end{Verbatim}

\begin{example}\label{ex:generator}
\begin{Verbatim}[frame=single,fontsize=\small]
% pick at least 1 rule and at maximum 5 rules for each class K.
1 { selected(X) :  predict_class(X, K), valid(X) } 5 :- class(K).
\end{Verbatim}
\end{example}
The choice rule above generates candidate subsets of size between 1 and 5 from \(R\), where we use the \texttt{selected/1} predicate to indicate that a rule (\texttt{rule(X)}) is included in the subset. 

Individual rule selection criteria are integrated into the generator choice rule by the \texttt{valid/1} predicate, where a rule \texttt{rule(X)} is valid whenever \texttt{invalid(X)} cannot be inferred.
\begin{Verbatim}[frame=single,fontsize=\small]
valid(X) :- rule(X), not invalid(X).
\end{Verbatim}

\begin{example}\label{ex:invalid}
The following criterion excludes rules with low support from the candidate set:
\begin{Verbatim}[frame=single,fontsize=\small]
% this will exclude rules that apply to less than 5% of instances
invalid(X) :- rule(X), support(X,S), S < 5.
\end{Verbatim}
\end{example}

Pairwise constraints can be used to encode dominance relations between rules. For a rule \texttt{X} to be dominated by \texttt{Y}, \texttt{Y} must be strictly better in one criterion than \texttt{X} and at least as good as \texttt{X} or better in other criteria. In the following case, we encode the dominance relation between rules using the accuracy metric and support, where we prefer rules that are accurate and cover more data.
\begin{Verbatim}[frame=single,fontsize=\small]
% cannot be dominated
:- dominated.
% X is dominated by Y
gt_acc_geq_cov(Y) :- selected(X), valid(Y),
    accuracy(X,Ax), accuracy(Y,Ay), support(X,Spx), support(Y,Spy),
    Ax < Ay, Spx <= Spy.
geq_acc_gt_cov(Y) :- selected(X), valid(Y),
    accuracy(X,Ax), accuracy(Y,Ay), support(X,Spx), support(Y,Spy),
    Ax <= Ay, Spx < Spy.
dominated :- valid(Y), gt_acc_geq_cov(Y).
dominated :- valid(Y), geq_acc_gt_cov(Y).
\end{Verbatim}

Collective constraints are applied to collections of rules, as opposed to individual or pairs of rules. 
The following restricts the maximum number of conditions in rule sets, using the \texttt{\#sum} aggregate:\footnote{Since the \texttt{\#sum} aggregate sums the first element of the terms (\texttt{S} in this case) while also following the set property, adding \texttt{X} to the term tuple is needed to allow the same weight \texttt{S} to be added more than once.}
\begin{Verbatim}[frame=single,fontsize=\small]
% total number of conditions should not exceed 30
:- #sum { S,X : size(X,S), selected(X) } > 30.
\end{Verbatim}

We envision two main use-cases for the criteria and constraints introduced in this section: (1) to generate rule sets with certain properties, and (2) to reduce the computation time. For (1), the user can use the individual selection criteria to ensure that the rules included into the candidate rule sets have certain properties, or the collective constraints to put restrictions on the aggregate properties of the rule sets. The latter use-case has more practical relevance because in our case, as in pattern mining, the complexity of a naive ``generate-and-test" approach is exponential with respect to the number of candidates. 

To reduce the search space, one can place an upper bound on the size of generated candidate sets, and use the \texttt{invalid/1} predicate to prevent unacceptable rules being included into the candidates, as shown above. Because setting unreasonable conditions leads to zero rule sets generated, care should be taken when using the selection criteria and constraints for this purpose. 
In particular, if any of the metric predicates listed in Table \ref{tab:asp_predicates} are used in defining \texttt{invalid/1}, e.g., \texttt{invalid(X) :- rule(X), metric(X, N), N < B.}, to avoid all \texttt{rule(X)} being \texttt{invalid(X)}, one should respect the conditions listed in Table \ref{tab:metric_bounds}.
In the following example, we will show how the \texttt{invalid/1} predicate can be used to reduce the search space.
\begin{example}\label{ex:reducesearch}
Let the logic program be:
\begin{Verbatim}[frame=single,fontsize=\small,numbers=left,xleftmargin=5mm]
1 { selected(X) : predict_class(X, K), valid(X) } 1 :- class(K).
valid(X) :- rule(X), not invalid(X).
invalid(X) :- rule(X), metric(X, N), N < B.
\end{Verbatim}
Then, there is at least one valid rule if \(\texttt{B} \leq max(\texttt{N}_1,...,\texttt{N}_{|R|})\).
Let \(\texttt{B} = 1+max(\texttt{N}_1,...,\texttt{N}_{|R|})\), then by line 3 (\(\texttt{N} < \texttt{B}\)), all rules will be \texttt{invalid}, and \texttt{valid(X)} cannot be inferred. Then, the choice rule (line 1) is not satisfied. Alternatively, let \(\texttt{B} = max(\texttt{N}_1,...,\texttt{N}_{|R|})\), then there is at least one rule such that \(\texttt{N} = \texttt{B}\). Since \(\texttt{invalid(X)}\) cannot be inferred for such a rule, it is \(\texttt{valid}\) and the choice rule is satisfied.
\end{example}

The upper bound parameter (5 in Example \ref{ex:generator}, and 1 in Example \ref{ex:reducesearch}) controls the potential maximum number of rules that can be included in a rule set. 
The actual number of rules that emerge in the final rule sets is highly dependent on the selection criteria, user preferences, and the characteristics of the tree-ensemble model. 
Practically, we recommend initially setting this parameter to a lower value (e.g., 3) while focusing on refining other aspects of the encoding, since this allows for a more manageable starting point. 
Given the ``generate-and-test" approach, high values may lead to excessively slow run time.
If it becomes evident that a larger rule set could be beneficial, the parameter can be incrementally increased. 
This ensures more efficient use of computational resources while also catering to the evolving needs of the encoding process.

\begin{table}
    \caption{List of minimum and maximum values for the bounds used in defining \(\texttt{invalid/1}\).}
    \label{tab:metric_bounds}
    \begin{minipage}{\textwidth}
    \begin{tabular}{lcll}
    \hline\hline
    Metric Predicate                    & Relation          & Condition & Intention \\
    \hline
    \small{\texttt{size(X,L)}}          & \texttt{L > B}    & \(\texttt{B} \geq min\{\texttt{L}_1,...,\texttt{L}_{|R|}\}\) & Invalid if the rule is too long  \\
    \small{\texttt{support(X,S)}}       & \texttt{S < B}    & \(\texttt{B} \leq max\{\texttt{S}_1,...,\texttt{S}_{|R|}\}\) & Invalid if the rule has low support  \\
    \small{\texttt{error\_rate(X,E)}}   & \texttt{E > B}    & \(\texttt{B} \geq min\{\texttt{E}_1,...,\texttt{E}_{|R|}\}\) & Invalid if the rule has high error rate \\
    \small{\texttt{accuracy(X,A)}}      & \texttt{A < B}     & \(\texttt{B} \leq max\{\texttt{A}_1,...,\texttt{A}_{|R|}\}\) & Invalid if the rule has low accuracy \\
    \small{\texttt{precision(X,P)}}     & \texttt{P < B}    & \(\texttt{B} \leq max\{\texttt{P}_1,...,\texttt{P}_{|R|}\}\) & Invalid if the rule has low precision \\
    \small{\texttt{recall(X,R)}}        & \texttt{R < B}    & \(\texttt{B} \leq max\{\texttt{R}_1,...,\texttt{R}_{|R|}\}\) & Invalid if the rule has low recall \\
    \small{\texttt{f1\_score(X,F)}}     & \texttt{F < B}    & \(\texttt{B} \leq max\{\texttt{F}_1,...,\texttt{F}_{|R|}\}\) & Invalid if the rule has low F1-score \\
    \hline\hline
    \end{tabular}
    \end{minipage}
\end{table}

\subsection{Optimizing Rule Sets}\label{sec:optimizingrulesets}
Finally, we pose the rule set generation problem as a multi-objective optimization problem, given the aforementioned facts and constraints encoded in ASP. The desiderata for generated rule sets may contain multiple competing objectives. For instance, we consider a case where the user wishes to collect accurate rules that cover many instances, while minimizing the number of conditions in the set. This is encoded as a group of optimization statements:
\begin{Verbatim}[frame=single,fontsize=\small]
% maximize accuracy and support, minimize the number of conditions
#maximize { A,X : selected(X), accuracy(X,A)}.
#maximize { S,X : selected(X), support(X,S)}.
#minimize { L,X : selected(X), size(X,L)}.
\end{Verbatim}

Instead of maximizing/minimizing the sums of metrics, we may wish to optimize more nuanced metrics, such as average accuracy and coverage of selected rules:
\begin{Verbatim}[frame=single,fontsize=\small]
% maximize average accuracy and coverage
selected_rules(SR) :- SR = #count { I : selected(I) }, SR != 0.
#maximize { Ai/(S*SR)@3,I : selected(I), size(I,S), 
            accuracy(I,Ai), selected_rules(SR) }.
#maximize { Sp/S@2,I : selected(I), size(I,S), support(I,Sp) }.
\end{Verbatim}
This metric can be maximized by selecting the smallest number of short and accurate rules. Similar metrics can be defined for precision-coverage,
\begin{Verbatim}[frame=single,fontsize=\small]
% maximize average precision and coverage
#maximize { Pi/(S*SR)@3,I : selected(I), size(I,S), 
            precision(I,Pi), selected_rules(SR) }.
#maximize { Sp/S@2,I : selected(I), size(I,S), support(I,Sp) }.
\end{Verbatim}
and for precision-recall.
\begin{Verbatim}[frame=single,fontsize=\small]
% maximize average precision and recall
#maximize { Pi/(S*SR)@3,I : selected(I), size(I,S), 
            precision(I,Pi), selected_rules(SR) }.
#maximize { R/S@2,I : selected(I), size(I,S), recall(I,R) }.
\end{Verbatim}

For optimization, we introduce a measure of overlap between the rules to be minimized. Intuitively, minimizing this objective should result in rule sets where rules share only a few conditions, which should further improve the explainability of the resulting rule sets. Specifically, we introduce a predicate \texttt{rule\_overlap(X,Y,Cn)} to measure the degree of overlap between rules \texttt{X} and \texttt{Y}.
\begin{Verbatim}[frame=single,fontsize=\small]
% number of shared conditions between rules
rule_overlap(X,Y,Cn) :- selected(X), selected(Y), X!=Y,
    Cn = #count { Cx : Cx=Cy, condition(X,Cx), condition(Y,Cy) }.
#minimize { Cn,X : selected(X), selected(Y), rule_overlap(X,Y,Cn) }.
\end{Verbatim}

\section{Rule Set Generation for Global and Local Explanations}\label{sec:rulesetgenerationforglobal}
In this section, we will describe how to generate global and local explanations with the rule set generation method.  \citeN{guidottiSurveyMethodsExplaining2019} defined global explanation as descriptions of how the overall system works, and local explanation as specific descriptions of why a certain decision was made. We shall now adopt these definitions to our rule set generation task from tree-ensemble models.
\begin{definition}
A \textit{global explanation} is a set of rules derived from the tree-ensemble model, that approximates the overall predictive behavior of the base tree-ensemble model. 
\end{definition}
Examples of measures of approximation for global explanations are: accuracy, precision, recall, F1-score, fidelity and coverage.
\begin{example}
Given a tree-ensemble as in Figure \ref{fig:simpletree}, a global explanation can be constructed from a candidate rule set that includes all possible paths to the leaf nodes (1, 2, ..., 10 in Figure \ref{fig:simpletree}), then selecting rules based on user-defined criteria.
\end{example}
\begin{definition}
An instance to be explained is called a \textit{prediction instance}.
A \textit{local explanation} is a set of rules derived from the tree-ensemble model, that approximates the predictive behavior of the base tree-ensemble model when applied to a specific prediction instance. 
\end{definition}

\begin{example}
Given a tree-ensemble as in Figure \ref{fig:simpletree}, and a prediction instance, a local explanation can be constructed by only considering rules that were active during the prediction, then selecting rules based on user-defined criteria. For example, if leaf nodes 2 and 6 were active, then \(R\) only includes rules constructed from the paths leading to nodes 2 and 6. 
Here, a leaf node is considered \textit{active} during prediction if the decision path for the prediction instance leads to it, meaning the conditions leading up to that node are satisfied by the instance's features.
\end{example}
The predictive behavior in this context refers to the method by which the model makes the prediction (aggregating decision tree outputs) and the outcomes of the prediction. 
The differences between the global and local explanations have implications on the encoding we use for rule set generation. 
Note that these two types of explanations serve distinct purposes. 
The global explanation seeks to explain the model's overall behavior, while the local explanation focuses on the reasoning behind a specific prediction instance. 
There is no inherent expectation for a global explanation to align with or fully encompass a local explanation.
In particular, when a local explanation is applicable to multiple instances due to these instances having similar feature values, for instance, this local explanation might not be able to accurately predict for these instances. 
This is measured by a precision metric, and evaluated further in Section \ref{sec:localexp} 

In Table \ref{tab:exampleexplanation} we show examples of global and local explanations on the same dataset (\textit{adult}).
For this dataset, the task is to predict whether an individual earns more or less than \$50,000 annually. 
The global and local explanations consist of 4 conditions, and share an attribute (hours-per-week) with different threshold values.
While these two rules have the same outcome, the attributes in the bodies are different: in this instance, the global explanation focuses more on the numerical attributes, while the local explanation contains categorical attributes.



\begin{table}
    \caption{Examples of global and local explanations from the adult dataset (LightGBM + ASP).}
    \label{tab:exampleexplanation}
    \begin{minipage}{\textwidth}
    \begin{tabular}{p{0.07\textwidth} p{0.15\textwidth} p{0.75\textwidth}}
    \hline\hline
        Type & Class & Explanation \\
    \hline
    Global & Income \(>\) 50k & 
    hours-per-week \(\leq\) 89.5 AND age \(>\) 27.5 AND age \(\leq\) 60.5 \newline 
    AND captal-gain \(>\) 7055.5 \\
    & & \\
    Local & Income \(>\) 50k & 
    hours-per-week \(>\) 30.5 AND relationship in \(\{\)Husband, Wife\(\}\) \newline 
    AND education in \(\{\)Assoc-acdm, Assoc-voc, Bachelors, Doctorate, \newline Masters, Prof-school, Some-college\(\}\) \newline 
    AND occupation in \(\{\)Exec-managerial, Prof-specialty, Protective-serv, \newline Sales, Tech-support\(\}\) \\

    \hline\hline
    \end{tabular}
    \end{minipage}
\end{table}

Recall that we start with the candidate rule set, \(R\), which is created by processing the tree-ensemble model. The rules in \(R\) are different between global and local explanations, even when the underlying tree-ensemble model is the same. For global explanations, we can enumerate all rules including internal nodes (Section \ref{sec:ruleextraction}) regardless of the outcomes of the rules because we are more interested in obtaining a simpler classifier with the help of constraints (Section \ref{sec:encodingconstraints}) and optimization criteria (Section \ref{sec:optimizingrulesets}). On the other hand, for local explanations, it is necessary to consider the match between the rules' prediction and the actual outcome of the tree-ensemble model as to keep the precision of explanations high. 

By definition, a local explanation should describe the behavior of the model \textit{on a single prediction instance}. Thus, we shall make the following modifications to \(R\) when generating rule sets for local explanations. 
We start from the candidate rule set \(R\) as in Algorithm \ref{alg:extract_rules} (Section \ref{sec:ruleextraction}), then, for each predicted instance:
\begin{enumerate}
    \item Identify the leaf nodes that were active during the prediction.
    \item Exclude rules that did not participate in the prediction.
    \item Replace the outcome of the rule with the predicted label.
\end{enumerate}
After the modification as outlined above, the maximum size of the starting rule set  will be the number of trees in a tree-ensemble. 
Let \(K\) be the number of decision trees in a tree-ensemble model, then, since there is exactly one leaf node per tree responsible for the prediction, there will be \(K\) rules.
Compared to the global explanation case (Proposition \ref{prop:candidateruleset}), the size of the candidate rule set is exponentially smaller for the local explanation.
This reduction is enabled by analyzing the behavior of the decision trees during prediction, and it is one of the benefits of using an explanation method which can take advantage of the structure of the model under study.

\section{Experiments}\label{sec:experiments}
In this section, we present a comprehensive evaluation of our rule set generation framework, focusing on both global and local explanations. We evaluate the explanations on public datasets using various metrics, and we also compare the performance to existing methods, including rule-based classifiers.
We used several metrics to assess the quality of the generated explanations. 
These metrics are designed to evaluate different aspects of the explanations, including their comprehensibility, fidelity and usability. 
Below, we provide an overview of the metrics used in our evaluations. 
Detailed discussions of these metrics can be found in the respective sections of this paper.

\textbf{Global Explanations} (Section \ref{sec:globalexp}): 
\begin{itemize}
    \item \textbf{Number of Rules and Conditions} (Section \ref{sec:globalnumrules} and \ref{sec:global_num_literals}): Assesses the simplification of the original model by counting the number of rules and conditions.
    \item \textbf{Relevance} (Section \ref{sec:relevanceofrules}): Measures the relevance of the rules by comparing the classification performance against the original model.
    \item \textbf{Fidelity} (Section \ref{sec:fidelityexperiments}): Measures the degree to which the rules accurately describe the behavior of the original model.
    \item \textbf{Run Time} (Section \ref{sec:globalruntime}): Measures the efficiency of generating global explanations.
\end{itemize}

\textbf{Local Explanations} (Section \ref{sec:localexp}):
\begin{itemize}
    \item \textbf{Number of Conditions} (Section \ref{sec:localnumcondition}): Measures the conciseness of the local explanations by counting the number of conditions.
    \item \textbf{Local-Precision and Coverage} (Section \ref{sec:localprecision}): Local-precision compares the model's predictions for instances covered by the local explanation with the prediction for the instance that induced the explanation. Local-coverage measures the proportion of instances in the validation set that are covered by the local explanation.
    \item \textbf{Run Time} (Section \ref{sec:localruntime}): Measures the efficiency of generating local explanations.
\end{itemize}

\subsection{Experimental Setup}
\subsubsection{Datasets}
We used in total 14 publicly available datasets, where except for the \textit{adult}\footnote{\url{https://github.com/propublica/compas-analysis}} dataset, all datasets were taken from the UCI Machine Learning Repository\footnote {\url{https://archive.ics.uci.edu/ml/index.php}}\cite{Dua:2019}.
Datasets were chosen from public repositories to ensure a diverse range in terms of the number of instances, the number of categorical variables, and class balance.
This was done intentionally, to observe the variance in, for example, explanation generation times. 
Additionally, the variation in categorical variables ratio and class balance was designed to produce a wide array of tree ensemble configurations (e.g., more or fewer trees, varying widths and depths).
We expected these configurations, in turn, to influence the nature of the explanations generated.
We included 3 datasets ({\it adult}, {\it credit german}, {\it compas}) for comparison because they were widely used in local explainability literature. 
The \textit{adult} dataset is actually a subset of the \textit{census} dataset, but we included the former for consistency with existing literature, and the latter for demonstrating the applicability of our approach to larger datasets. 
The summary of these datasets is shown in Table \ref{tab:datadescription}.

\subsubsection{Experimental Settings}
We used \textit{clingo} 5.4.0\footnote{\url{https://potassco.org/clingo/}} \cite{DBLP:journals/corr/GebserKKS14} for answer set programming, and set the time-out to 1,200 seconds. 
We used RIPPER implemented in Weka \cite{wittenWEKAWorkbenchOnline2016} and an open-source implementation of RuleFit\footnote{\url{https://github.com/christophM/rulefit}} where Random Forest was selected as the rule generator, and scikit-learn\footnote{\url{https://scikit-learn.org/}} \cite{scikit-learn} for general machine learning functionalities. 
Our experimental environment is a desktop machine with Ubuntu 18.04, Intel Core i9-9900K 3.6GHz (8 cores/16 threads) and 64 GB RAM. 
For reproducibility, all source codes for the implementation, experiments and preprocessed datasets are available from our GitHub repository.\footnote{\url{https://github.com/atakemura/treetap}}

Unless noted otherwise, all experimental results reported here were obtained with 5-fold cross validation, with hyperparameter optimization in each fold. 
To evaluate the performance of the extracted rule sets, we implemented a naive rule-based classifier, which is constructed from the rule sets extracted with our method. In this classifier, we apply the rules sequentially to the validation dataset and if all conditions within a rule are true for an instance in the dataset, the consequent of the rule is returned as the predicted class. More formally, given a set of rules \(R_s \subset R\) with cardinality \(|R_s|\) that shares the same consequent \(class(Q)\), we represent this rule-based classifier as the disjunction of antecedents of the rules:
\begin{equation*}
    class(Q) \Leftarrow body(R_1) \vee body(R_2) \vee ... \vee body(R_r) \textrm{ where } 1 \leq r \leq |R_s|
\end{equation*}
For a given data point, it is possible that there are no rules applicable, and in such cases the most common class label in the training dataset is returned.

\begin{landscape}
\begin{table}
    \caption{Dataset description, selected hyperparameters and candidate rule counts.}
    \label{tab:datadescription}
    \begin{minipage}{\linewidth}
    \begin{tabular}{l rrr p{0.13\textwidth} rr rrr rrr}
    \hline\hline
       &  & & & & \multicolumn{2}{c}{Decision Tree} & \multicolumn{3}{c}{Random Forest} & \multicolumn{3}{c}{LightGBM} \\
                    & & & & &{MaxD\footnote{The maximum depth parameter. The minimum and maximum values set during the hyperparameter search are shown in parentheses. The hyperparameters shown in this table are averaged over 5 folds.}} & {\(|R|\)}\footnote{The number of candidate rules, averaged over 5 folds.} & {\#Tree\footnote{The number of trees (estimators) parameter.}} & {MaxD} & {\(|R|\)} & {\#Tree} & {MaxD} & {\(|R|\)} \\ \cmidrule(lr){6-7} \cmidrule(lr){8-10} \cmidrule(lr){11-13}
    Dataset         & \#Instance & \#Feature\footnote{The number of features (columns). The number of categorical features is shown in parentheses.} & \#Ratio & Meaning of y = 1 & (2, 9) & {} & (50, 500) & (2, 9) & {} & {(30, 1,000)} & (2, 9) & {} \\
    \hline
    adult           & 48,842    & 12 (8)    & 0.24    & income \(>\) 50k    & 9.0 & 104.6 & 240 & 9.0 & 8,180.0 & 206.2 & 5.0 & 4,227.6 \\
    autism          & 704       & 20 (18)   & 0.27    & screening result    & 5.0 & 2.0   & 200 & 5.8 & 2,083.4 & 485.6 & 6.6 & 2.0 \\
    breast          & 699       & 9 (9)     & 0.34    & malignant           & 6.4 & 16.8  & 200 & 5.0 & 1,391.2 & 131.2 & 6.4 & 194.0 \\
    cars            & 1,728     & 6 (6)     & 0.30    & acceptable condition& 9.0 & 41.4  & 396 & 8.4 & 13,027.4 & 889.0 & 7.4 & 1,308.0 \\
    census          & 299,285   & 40 (33)   & 0.06    & income \(>\) 50k    & 8.2 & 81.8  & 320 & 9.0 & 9,773.4 & 198.2 & 9.0 & 9,533.0 \\
    compas          & 7,214     & 11 (7)    & 0.28    & 2 year recidivism   & 8.2 & 62.8  & 320 & 9.0 & 23,254.4 & 108.4 & 6.2 & 985.4 \\
    credit australia & 690      & 14 (8)    & 0.44    & application accepted& 9.0 & 3.0   & 212 & 7.0 & 3,178.2 & 68.4 & 5.2 & 528.0 \\
    credit german   & 1,000     & 20 (13)   & 0.30    & good creditor       & 8.4 & 32.4  & 280 & 9.0 & 13,488.4 & 55.2 & 5.0 & 364.6 \\
    credit taiwan   & 30,000    & 23 (10)   & 0.22    & payment next month  & 7.0 & 53.6  & 280 & 9.0 & 15,976.4 & 138.8 & 7.0 & 4,150.0 \\
    heart           & 270       & 13 (8)    & 0.44    & disease present     & 7.6 & 12.4  & 288 & 6.6 & 2,906.6 & 32.8 & 7.4 & 298.4 \\
    ionosphere      & 351       & 34 (0)    & 0.64    & good radar return   & 4.8 & 9.2   & 324 & 5.4 & 1,317.2 & 48.0 & 5.8 & 311.8 \\
    kidney          & 400       & 24 (13)   & 0.62    & chronic disease     & 5.4 & 6.8   & 200 & 5.0 & 996.4 & 817.8 & 8.2 & 359.0 \\
    krvskp          & 3,196     & 36 (36)   & 0.52    & white can win       & 9.0 & 33.6  & 320 & 9.0 & 9,647.4 & 293.0 & 8.2 & 2,354.2 \\
    voting          & 435       & 16 (16)   & 0.61    & democrat            & 5.8 & 10.2  & 240 & 6.6 & 2,555.0 & 44.6 & 5.8 & 168.0 \\

\hline\hline
\end{tabular}
\vspace{-2\baselineskip}
\end{minipage}
\end{table}
\end{landscape}

\subsection{Evaluating Global Explanations}\label{sec:globalexp}
Let us recall that the purpose of generating global explanations is to provide the user with a simpler model of the original complex model. Thus, we introduce proxy measures to evaluate (1) the degree to which the model is simplified, by the number of extracted rules and conditions,  (2) the relevance of the extracted rules, by comparing classification performance metrics against the original model, and (3) the degree to which the explanation accurately describes the behavior of the original model, by fidelity metrics.

We conducted the experiment in the following order. 
First, we trained Decision Tree, Random Forest and LightGBM on the datasets in Table \ref{tab:datadescription}. 
Selected optimized hyperparameters of the tree-ensemble models are also reported in Table \ref{tab:datadescription}. 
Further details on hyperparameter optimization are available in \ref{sec:appendix_hyperparameter}.
We then applied our rule set generation method to the trained tree-ensemble models. 
Finally, we constructed a naive rule-based classifier using the set of rules extracted in the previous step, and calculated performance metrics on the validation set. 
This process was repeated in a 5-fold stratified cross validation setting to estimate the performance. 
We compare the characteristics of our approach against the known methods RIPPER and RuleFit.

We used the following selection criteria to filter out rules that were considered to be undesirable; for example, those rules with low accuracy or low coverage. We used the same set of selection criteria for all datasets, irrespective of underlying label distribution or learning algorithms. When the candidate rules violate any one of those criteria, they are excluded from the candidate rule set, which means that in the worst case where all the candidate rules violate at least one criterion, this encoding will result in an empty rule set (see Section \ref{sec:encodingconstraints}).
\begin{Verbatim}[frame=single,fontsize=\small]
% exclude long rules
invalid(I) :- size(I,S), S > 10, rule(I).
% exclude inaccurate rules
invalid(I) :- error_rate(I,E), E > 70, rule(I).
% exclude low precision rules
invalid(I) :- precision(I,P), P < 2, rule(I).
% exclude low recall rules
invalid(I) :- recall(I,R), R < 2, rule(I).
% exclude low coverage rules
invalid(I) :- support(I,Sp), Sp < 2, rule(I).
\end{Verbatim}
Another scenario in which our method will produce an empty rule set is when the tree-ensemble contains only ``leaf-only" or ``stump" trees, that have one leaf node and no splits. In this case, we have no split information to create candidate rules; thus, an empty rule set is returned to the user. This is often caused by inadequate setting of hyperparameters that control the growth of the trees, especially when using imbalanced datasets. It is however outside the scope of this paper, and we will simply note such cases (empty rule set returned) in our results without further consideration. 

\subsubsection{Number of Rules}\label{sec:globalnumrules}

The average sizes of generated rule sets are shown in Table \ref{tab:numrulecond}. 
The sizes of candidate rule sets, from which the rule sets are generated, are listed in the \(|R|\) columns in Table \ref{tab:datadescription}.
Rule set size of 1 means that the rule set contains a single rule only.
As one might expect, the Decision Tree consistently has the smallest candidate rule set, but in some cases the Random Forest produced considerably more candidate rules than the LightGBM, e.g., \textit{cars, compas}. 
Our method can produce rule sets which are significantly smaller than the original model, based on the comparison between the sizes of the candidate rule set \(|R|\) and resulting rule sets.

We will now compare our method to the two benchmark methods, RuleFit and RIPPER. 
The average size of generated rule sets is shown in Table \ref{tab:numrulecond}. 
RuleFit includes original features (called linear terms) as well as conditions extracted from the tree-ensembles in the construction of a sparse linear model, that is to say, the counts in Table \ref{tab:numrulecond} may be inflated by the linear terms. 
On the other hand, the output from RIPPER only contains rules, and RIPPER has rule pruning and rule set optimization to further reduce the rule set size. Moreover, RIPPER has direct control over which conditions to include into rules, whereas our method and RuleFit rely on the structure of the underlying decision trees to construct candidate rules. 
Our method consistently produced smaller rule sets compared to RuleFit and RIPPER, although the difference between our method and RIPPER was not as pronounced when compared to the difference between our method and RuleFit. RuleFit produced the largest number of rules compared with other methods, although they were much smaller than the original Random Forest models (Table \ref{tab:numrulecond}).

\begin{landscape}
\begin{table}
    \caption{Size of rule sets, total and average number of conditions in rules}
    \label{tab:numrulecond}
    \begin{minipage}{\linewidth}
\begin{tabular}{lrrrrrrrrrrrrrrr}
\hline\hline
{} & \multicolumn{5}{c}{Size of Rule Sets} & \multicolumn{5}{c}{Total number of conditions in rules} & \multicolumn{5}{c}{Average number of conditions in rules} \\\cmidrule(lr){2-6} \cmidrule(lr){7-11} \cmidrule(lr){12-16}
Dataset &     DT\footnote{Decision Tree + ASP} & RF \footnote{Random Forest + ASP} & LGBM\footnote{LightGBM + ASP} & RuleFit & RIPPER &        DT & RF & LGBM & RuleFit & RIPPER &        DT & RF & LGBM & RuleFit & RIPPER \\
\hline
adult            &         1.0 &     1.4 &       1.4 &     358.0 &     25.0 &            4.0 &     5.8 &       7.0 &    1142.6 &    110.8 &            4.0 &     3.6 &       5.0 &       3.2 &      4.4 \\
autism           &         1.0 &     1.0 &       1.0 &       3.2 &      2.0 &            1.0 &     1.0 &       1.0 &       4.4 &      1.0 &            1.0 &     1.0 &       1.0 &       1.3 &      0.5 \\
breast           &         1.0 &     1.2 &       1.0 &     508.0 &     14.8 &            5.8 &     6.0 &       1.0 &    1443.6 &     19.8 &            5.8 &     5.0 &       1.0 &       3.0 &      1.3 \\
cars             &         1.0 &     1.0 &       1.0 &     478.0 &     28.0 &            6.4 &     5.4 &       1.0 &    1536.8 &    101.8 &            6.4 &     5.4 &       1.0 &       3.3 &      3.4 \\
census           &         1.0 &     1.0 &       1.0 &     736.0 &     60.6 &            8.2 &     3.4 &       1.8 &    1967.4 &    381.2 &            8.2 &     3.4 &       1.8 &       2.7 &      6.3 \\
compas           &         1.0 &     1.0 &       1.0 &     174.8 &     10.4 &            3.8 &     2.0 &       2.4 &     554.4 &     27.6 &            3.8 &     2.0 &       2.4 &       3.2 &      2.6 \\
credit australia &         1.0 &     1.0 &       1.0 &      50.6 &      5.4 &            2.0 &     2.4 &       3.2 &     134.4 &      9.8 &            2.0 &     2.4 &       3.2 &       2.6 &      1.6 \\
credit german    &         1.2 &     2.0 &       1.2 &     392.8 &      4.2 &            4.6 &    18.0 &       3.0 &    1137.8 &      8.4 &            3.6 &     9.0 &       2.0 &       3.0 &      2.0 \\
credit taiwan    &         1.0 &     1.8 &       2.6 &     110.2 &      7.2 &            5.2 &    13.4 &       7.6 &     311.2 &     18.0 &            5.2 &     6.2 &       3.4 &       2.8 &      2.5 \\
heart            &         1.0 &     1.0 &       1.6 &     424.8 &      5.6 &            2.4 &     3.0 &       4.8 &    1127.0 &     13.4 &            2.4 &     3.0 &       3.0 &       2.3 &      2.1 \\
ionosphere       &         1.0 &     1.0 &       1.0 &     398.8 &      5.0 &            4.6 &     4.8 &       5.4 &     973.4 &      6.2 &            4.6 &     4.8 &       5.4 &       2.5 &      1.1 \\
kidney           &         1.0 &     1.0 &       1.2 &      94.6 &      4.6 &            1.6 &     2.4 &       4.2 &     259.2 &      7.2 &            1.6 &     2.4 &       3.6 &       2.7 &      1.5 \\
krvskp           &         1.0 &     1.0 &       1.0 &     276.0 &     16.4 &            6.0 &     7.8 &       3.0 &     881.2 &     52.8 &            6.0 &     7.8 &       3.0 &       3.2 &      3.2 \\
voting           &         1.0 &     1.0 &       1.0 &     300.6 &      3.4 &            2.2 &     1.0 &       2.0 &     803.6 &      5.8 &            2.2 &     1.0 &       2.0 &       2.6 &      1.3 \\
\hline\hline
\end{tabular}
\end{minipage}
\end{table}
\end{landscape}

\subsubsection{Number of Conditions in Rules}\label{sec:global_num_literals}
In this subsection, we compare the average number of conditions in each rule and the total number of conditions in rules.
One would expect a more precise rule to have a larger number of conditions in its body compared to the one that is more general.
It should be noted that, however, due to the experimental condition, the maximum number of conditions in a single rule is set by the maximum depth parameter in each of the learning algorithms, which in turn is set by the hyperparameter tuning algorithm.

The average number of conditions in each rule are shown in Table \ref{tab:numrulecond}.
We note that sometimes the algorithms may produce rules without any conditions in the bodies, such as when the induced trees have only a single split node at the root; thus the average number reported in Table \ref{tab:numrulecond} may be biased towards lower numbers.
From the table, we see that the average number of conditions in a rule generally falls in the range of between 1 and 10, and this is consistent with the search range of hyperparameters we set for the experiments.
Table \ref{tab:numrulecond} shows the total number of conditions in a rule set.
Unlike the average number of conditions in a rule, we see a large difference between our method and the benchmark methods.
In all datasets, RuleFit produced the highest counts of conditions in rules, followed by RIPPER and the ensemble-based methods.
From Table \ref{tab:numrulecond}, we make the following observations: (1) the length of individual rules does not vary as much as the number of rules between different methods (2) the high number of conditions in rules extracted by RuleFit can be explained by the high number of rules, where the length of individual rules are comparable to other methods.

\subsubsection{Relevance of Rules}\label{sec:relevanceofrules}

To quantify the relevance of the extracted rules, we measured the ratio of performance metrics using the naive rule-based classifier by 5-fold cross validation (Table \ref{tab:performanceratio}). A performance ratio of less than 1.0 means that the rule-based classifier performed worse than the original classifier (LightGBM and Random Forest), whereas a performance ratio greater than 1.0 means the rule set's performance is better than the original classifier. We used a version of the ASP encoding shown in Section \ref{sec:optimizingrulesets} where the accuracy and coverage are maximized. RIPPER was excluded from this comparison because it has a built-in rule generation and refinement process, and it does not have a base model, whereas our method and RuleFit use variants of tree-ensemble models as base models.

From Table \ref{tab:performanceratio} we observe that in terms of accuracy, RuleFit generally performs as well as, or marginally better than, the original Random Forest. 
On the other hand, although our method can produce rule sets that are comparable in performance against the original model, they do not produce rules that perform significantly better. 
With Decision Tree and Random Forest, the generated rule sets perform much worse than the original model, e.g., in \textit{kidney}, \textit{voting}. 
The LightGBM+ASP combination resulted in the second-best performance overall, where the resulting rules' performances were arguably comparable (0.8-0.9 range) to the original model with a few exceptions (e.g., \textit{census} F1-score) where the performance ratio was about half of the original. 
While RuleFit's performance was superior, our method could still produce rule sets with reasonable performance with much smaller rule sets that are an order of magnitude smaller than RuleFit. 
A rather unexpected result was that using our method (Random Forest) or RuleFit significantly improved the F1-score in the \textit{census} dataset. 
In Table \ref{tab:performanceratio} we can see that recall was the major contributor to this improvement.

\begin{landscape}
\begin{table}
    \caption{Average ratio of rule-based classifiers' performance vs. original tree-ensembles, averaged over 5 folds. (Global Explanations)}
    \label{tab:performanceratio}
    \begin{minipage}{\linewidth}
\begin{tabular}{l rrrr rrrr rrrr rrrr}
\hline\hline
{} & \multicolumn{4}{c}{Accuracy ratio} & \multicolumn{4}{c}{F1 ratio} & \multicolumn{4}{c}{Precision ratio} & \multicolumn{4}{c}{Recall ratio} \\\cmidrule(lr){2-5}\cmidrule(lr){6-9}\cmidrule(lr){10-13}\cmidrule(lr){14-17}
Dataset & DT\footnote{Decision Tree + ASP} & RF\footnote{Random Forest + ASP} & LGBM\footnote{LightGBM + ASP} & RuleFit & DT & RF & LGBM & RuleFit & DT & RF & LGBM & RuleFit & DT & RF & LGBM & RuleFit \\ 
\hline
adult            &         0.92 &         0.91 &     \textbf{0.94} &    1.01 &         0.34 &         0.63 &     0.78 &    \textbf{1.12} &         \textbf{1.30} &         0.95 &     0.86 &    0.94 &         0.22 &         0.75 &     0.74 &    \textbf{1.25} \\
autism           &         \textbf{1.00} &         \textbf{1.00} &     \textbf{1.00} &    \textbf{1.00} &         1.00 &         \textbf{1.01} &     1.00 &    \textbf{1.01} &         \textbf{1.00} &         \textbf{1.00} &     \textbf{1.00} &    \textbf{1.00} &         1.00 &         \textbf{1.01} &     1.00 &    \textbf{1.01} \\
breast           &         0.91 &         0.96 &     0.95 &    \textbf{0.98} &         0.80 &         0.93 &     0.91 &    \textbf{0.97} &         0.97 &         0.97 &     0.96 &   \textbf{0.99} &         0.70 &         0.91 &     0.87 &    \textbf{0.95} \\
cars             &         0.80 &         0.81 &     0.52 &    \textbf{1.01} &         0.41 &         0.50 &     0.44 &    \textbf{1.02} &         1.04 &         \textbf{1.05} &     0.36 &    1.04 &         0.26 &         0.32 &     0.60 &    \textbf{1.00} \\
census           &         0.97 &         0.99 &     0.80 &    \textbf{1.02} &         0.16 &         5.63 &     0.40 &   \textbf{10.87} &         0.18 &         0.44 &     0.30 &    \textbf{0.76} &         0.16 &         8.38 &     1.26 &   \textbf{17.14} \\
compas           &         0.95 &         0.85 &     0.85 &    \textbf{1.01} &         0.44 &         0.84 &     0.77 &    \textbf{1.08} &         \textbf{1.12} &         0.63 &     0.73 &    0.96 &         0.32 &         1.13 &     0.96 &    \textbf{1.18} \\
credit australia &         0.89 &         0.89 &     0.88 &    \textbf{1.00} &         0.77 &         0.82 &     0.77 &    \textbf{1.00} &         \textbf{1.17} &         1.00 &     1.04 &    1.00 &         0.55 &         0.73 &     0.67 &    \textbf{0.99} \\
credit german    &         0.91 &         0.92 &     0.88 &    \textbf{0.95} &         0.46 &         0.83 &     0.90 &    \textbf{1.18} &         \textbf{1.01} &         0.69 &     0.72 &    0.78 &         0.39 &         0.94 &     1.13 &    \textbf{1.56} \\
credit taiwan    &         0.96 &         0.93 &     0.99 &    \textbf{1.01} &         0.21 &         0.63 &     0.83 &    \textbf{1.11} &         0.90 &         0.74 &     \textbf{1.05} &    0.98 &         0.15 &         0.70 &     0.75 &    \textbf{1.18} \\
heart            &         0.93 &         0.90 &     0.95 &    \textbf{1.00} &         0.78 &         0.82 &     0.80 &    \textbf{1.00} &         1.11 &         0.96 &     \textbf{1.22} &    1.00 &         0.69 &         0.73 &     0.63 &    \textbf{1.00} \\
ionosphere       &         0.73 &         0.69 &     0.98 &    \textbf{0.99} &         0.86 &         0.82 &     0.98 &    \textbf{0.99} &         0.71 &         0.70 &     \textbf{1.00} &    0.99 &         \textbf{1.10} &         1.02 &     0.96 &    0.99 \\
kidney           &         0.66 &         0.62 &     0.91 &    \textbf{1.00} &         0.80 &         0.77 &     0.93 &    \textbf{1.00} &         0.66 &         0.62 &     0.92 &    \textbf{1.00} &         \textbf{1.04} &         1.00 &     0.94 &    1.00 \\
krvskp           &         0.53 &         0.54 &     0.59 &    \textbf{1.02} &         0.69 &         0.71 &     0.67 &    \textbf{1.02} &         0.53 &         0.54 &     0.65 &    \textbf{1.03} &         1.00 &         \textbf{1.02} &     0.76 &    1.01 \\
voting           &         0.64 &         0.64 &     0.96 &    \textbf{0.99} &         0.78 &         0.79 &     0.96 &    \textbf{0.99} &         0.62 &         0.63 &     \textbf{1.01} &    0.98 &         1.04 &         \textbf{1.05} &     0.92 &    1.01 \\
\hline\hline
\end{tabular}
\vspace{-2\baselineskip}
\end{minipage}
\end{table}
\end{landscape}

\begin{landscape}
\begin{table}
    \caption{Fidelity metrics, averaged over 5 folds. (Global Explanations)}
    \label{tab:global_fidelity}
    \begin{minipage}{\linewidth}
\begin{tabular}{l rrrr rrrr rrrr rrrr}
\hline\hline
{} & \multicolumn{4}{c}{Accuracy} & \multicolumn{4}{c}{F1 score} & \multicolumn{4}{c}{Precision} & \multicolumn{4}{c}{Recall} \\\cmidrule(lr){2-5}\cmidrule(lr){6-9}\cmidrule(lr){10-13}\cmidrule(lr){14-17}
Dataset & DT\footnote{Decision Tree + ASP} & RF\footnote{Random Forest + ASP} & LGBM\footnote{LightGBM + ASP} & RuleFit & DT & RF & LGBM & RuleFit & DT & RF & LGBM & RuleFit & DT & RF & LGBM & RuleFit \\ 
\hline
adult            &         0.85 &         0.87 &     0.86 &    \textbf{0.94} &         0.29 &         0.42 &     0.44 &    \textbf{0.83} &         \textbf{1.00} &         0.87 &     0.93 &    0.73 &         0.17 &         0.33 &     0.35 &    \textbf{0.96} \\
autism           &         \textbf{1.00} &         \textbf{1.00} &     \textbf{1.00} &    \textbf{1.00} &         \textbf{1.00} &         0.99 &     \textbf{1.00} &    0.99 &         \textbf{1.00} &         0.99 &     \textbf{1.00} &    0.99 &         \textbf{1.00} &         \textbf{1.00} &     \textbf{1.00} &    \textbf{1.00} \\
breast           &         0.91 &         0.86 &     0.89 &    \textbf{0.97} &         0.82 &         0.75 &     0.81 &    \textbf{0.96} &         \textbf{1.00} &         0.97 &     0.96 &    0.98 &         0.72 &         0.64 &     0.71 &    \textbf{0.94} \\
cars             &         0.77 &         0.79 &     0.60 &    \textbf{0.98} &         0.39 &         0.48 &     0.52 &    \textbf{0.97} &         \textbf{1.00} &         \textbf{1.00} &     0.50 &    0.99 &         0.25 &         0.31 &     0.63 &    \textbf{0.95} \\
census           &         0.92 &         \textbf{0.97} &     0.90 &    0.96 &         0.00 &         0.07 &     \textbf{0.36} &    0.09 &         0.00 &         0.04 &     \textbf{0.56} &    0.05 &         0.00 &         0.61 &     0.43 &    \textbf{1.00} \\
compas           &         0.82 &         0.75 &     0.79 &    \textbf{0.94} &         0.38 &         0.50 &     0.50 &    \textbf{0.83} &         \textbf{0.93} &         0.41 &     0.68 &    0.76 &         0.26 &         0.71 &     0.53 &    \textbf{0.93} \\
credit australia &         0.72 &         0.81 &     0.78 &    \textbf{0.96} &         0.64 &         0.73 &     0.65 &    \textbf{0.95} &         \textbf{1.00} &         0.92 &     0.95 &    0.96 &         0.47 &         0.66 &     0.55 &    \textbf{0.95} \\
credit german    &         0.83 &         \textbf{0.85} &     0.75 &    0.81 &         0.48 &         0.28 &     0.43 &    \textbf{0.55} &         \textbf{0.93} &         0.66 &     0.54 &    0.43 &         0.42 &         0.25 &     0.50 &    \textbf{0.83} \\
credit taiwan    &         0.89 &         0.91 &     0.93 &    \textbf{0.98} &         0.20 &         0.46 &     0.55 &    \textbf{0.89} &         0.73 &         0.80 &     \textbf{0.98} &    0.82 &         0.12 &         0.36 &     0.44 &    \textbf{0.98} \\
heart            &         0.86 &         0.79 &     0.80 &    \textbf{0.89} &         0.74 &         0.66 &     0.67 &    \textbf{0.86} &         \textbf{1.00} &         0.93 &     \textbf{1.00} &    0.86 &         0.66 &         0.55 &     0.52 &    \textbf{0.86} \\
ionosphere       &         0.65 &         0.69 &     0.82 &    \textbf{0.95} &         0.79 &         0.81 &     0.84 &    \textbf{0.97} &         0.65 &         0.69 &     \textbf{0.99} &    0.97 &         \textbf{1.00} &         \textbf{1.00} &     0.75 &    0.96 \\
kidney           &         0.63 &         0.62 &     0.86 &    \textbf{1.00} &         0.77 &         0.77 &     0.88 &    \textbf{1.00} &         0.63 &         0.62 &     0.95 &    \textbf{1.00} &         \textbf{1.00} &         \textbf{1.00} &     0.83 &    1.00 \\
krvskp           &         0.53 &         0.53 &     0.62 &    \textbf{0.97} &         0.69 &         0.69 &     0.67 &    \textbf{0.97} &         0.53 &         0.53 &     0.72 &    \textbf{0.98} &         \textbf{1.00} &         \textbf{1.00} &     0.72 &    0.96 \\
voting           &         0.60 &         0.60 &     0.93 &    \textbf{0.97} &         0.75 &         0.75 &     0.94 &    \textbf{0.98} &         0.60 &         0.60 &     \textbf{1.00} &    0.96 &         \textbf{1.00} &         \textbf{1.00} &     0.88 &    1.00 \\
\hline\hline
\end{tabular}
\vspace{-2\baselineskip}
\end{minipage}
\end{table}
\end{landscape}

\subsubsection{Fidelity Metrics of Global Explanations}\label{sec:fidelityexperiments}

In Section \ref{sec:relevanceofrules}, we compared the ratio of performance metrics of different methods when measured against original labels. 
In the context of evaluating explanation methods, it is also important to investigate the \textit{fidelity}, i.e., to which extent the explanation is able to accurately imitate the original model \cite{guidottiSurveyMethodsExplaining2019}. 
A fidelity metric is calculated as an agreement metric between the prediction of the original model and the explanation, the latter in this case is the rule set. 
More concretely, when the predicted class by the model is positive and that by the explanation is positive, it is a true positive (TP), and when the latter is negative, it is a false negative (FN).
Thus, the fidelity metrics can be calculated in the same manner as performance metrics using the equations shown in Section \ref{sec:metrics}. 
RIPPER was excluded from this comparison for the same reasons as outlined in Section \ref{sec:relevanceofrules}.

The average fidelity metrics (accuracy, F1 score, precision and recall) are shown in Table \ref{tab:global_fidelity}.
The overall trend is similar to the previous section on rule relevance, where RuleFit performs the best overall in terms of fidelity.
The accuracy metrics for our method shows that the global explanations, in general, behaved similarly to the original model, although RuleFit was better in most of the datasets.
The precision metrics show that, even when excluding the results for Decision Tree (which is not a tree-ensemble learning algorithm), our method could produce explanations that had high fidelity in terms of precision compared to RuleFit.
The fidelity metrics may be improved further by including them in the ASP encodings, since they were not part of the selection criteria or optimization goals.

\subsubsection{Changing Optimization Criteria}

The definition of optimization objectives has a direct influence over the performance of the resulting rule sets, and the objectives need to be set in accordance with user requirements. The answer sets found by \textit{clingo} with multiple optimization statements are optimal regarding the set of goals defined by the user. Instead of using accuracy, one may use other rule metrics as defined in Table \ref{tab:asp_predicates} such as precision and/or recall. If there are priorities between optimization criteria, then one could use the priority notation (\texttt{weight@priority}) in \textit{clingo} to define them. Optimal answer sets can be computed in this way, however, if enumeration of such optimal sets is important, then one could use the \texttt{pareto} or \texttt{lexico} preference definitions provided by \textit{asprin} \cite{brewkaAsprinCustomizingAnswer2015} to enumerate Pareto optimal answer sets. Instead of presenting a single optimal rule set to the user, this will allow the user to explore other optimal rule sets.

To investigate the effect of changing optimization objectives, we changed the ASP encoding from max. accuracy-coverage to max. precision-coverage (shown in Section \ref{sec:encodingconstraints}) while keeping other parameters constant. 
The results are shown in Table \ref{tab:globalchangeenc}. 
Note that it is the ratio of precision score shown in the table, as opposed to accuracy or F1-score in the earlier tables. 
Here, since we are optimizing for better precision, we expect the precision-coverage encoding to produce rule sets with better precision scores than the accuracy-coverage encoding. 
For the Decision Tree and Random Forest + ASP, the effect was not as pronounced as we expected, but we observed noticeable differences in datasets \textit{compas} and \textit{credit german}. 
For the LightGBM+ASP combination, we observed more consistent difference, except for the \textit{credit german} dataset, the encoding produced intended results in most of the datasets in this experiment.

\begin{table}[]
    \centering
    \caption{Average ratio of rule-based classifier's precision vs. original tree-ensembles, averaged over 5 folds. (Global Explanations)}
    \label{tab:globalchangeenc}
    \begin{minipage}{\textwidth}
    \begin{tabular}{l rr rr rr }
    \hline\hline
        & \multicolumn{2}{c}{Decision Tree+ASP\footnote{Performance ratio of 1 means the rule set's precision is identical to the original classifier. Numbers are shown in bold where the performance ratio was better than more than 0.01 compared to the other encoding.}} & \multicolumn{2}{c}{Random Forest+ASP} & \multicolumn{2}{c}{LightGBM+ASP} \\ \cmidrule(lr){2-3} \cmidrule(lr){4-5} \cmidrule(lr){6-7}
Dataset  & acc.cov\footnote{acc.cov=accuracy and coverage encoding, see Section 4.} & prec.cov\footnote{prec.cov=precision and coverage encoding, see Section 4.} & acc.cov & prec.cov & acc.cov & prec.cov \\
    \hline

adult            &         1.30 &     1.30 &         0.95 &     \textbf{1.13} &     0.86 &     \textbf{1.27} \\
autism           &         1.00 &     1.00 &         1.00 &     1.00 &     1.00 &     1.00 \\
breast           &         0.97 &     0.97 &         0.97 &     \textbf{1.04} &     0.96 &     \textbf{1.01} \\
cars             &         1.04 &     1.04 &         1.05 &     1.05 &     0.36 &     0.36 \\
census           &         0.07 &     \textbf{0.24} &         0.44 &     \textbf{0.47} &     0.30 &     \textbf{0.90} \\
compas           &         1.12 &     \textbf{1.26} &         0.63 &     0.63 &     0.73 &     \textbf{0.96} \\
credit australia &         1.17 &     1.17 &         1.00 &     \textbf{1.01} &     1.04 &     \textbf{1.06} \\
credit german    &         1.01 &     \textbf{1.39} &         0.69 &     \textbf{0.92} &     \textbf{0.72} &     0.60 \\
credit taiwan    &         0.90 &     \textbf{1.04} &         0.74 &     \textbf{1.08} &     1.05 &     \textbf{1.08} \\
heart            &         1.11 &     1.11 &         0.96 &     \textbf{1.06} &     1.22 &     \textbf{1.27} \\
ionosphere       &         0.71 &     0.71 &         0.70 &     0.70 &     1.00 &     \textbf{1.02} \\
kidney           &         0.66 &     0.66 &         0.62 &     0.62 &     0.92 &     \textbf{0.97} \\
krvskp           &         0.53 &     0.53 &         0.54 &     0.54 &     0.65 &     \textbf{0.70} \\
voting           &         0.62 &     0.62 &         0.63 &     0.63 &     1.01 &     \textbf{1.02} \\

\hline\hline
\end{tabular}
\vspace{-2\baselineskip}
\end{minipage}
\end{table}

\subsubsection{Global Explanation Running Time}\label{sec:globalruntime}

The average running time of generating global explanations is reported in Table \ref{tab:globalruntime}.
The running time measures the rule extraction and rule set generation steps for our method, and measures the running time for RuleFit, but excludes the time taken for the model training (e.g., Random Forest) and hyperparameter optimization.
Comparing the methods that share the same base model (RF+ASP and RuleFit, both based on Random Forest), we observe that our method is slower than RuleFit except when the datasets are relatively large (e.g., \textit{adult, census, compas} and \textit{credit taiwan}), and in the latter cases our method can be much faster than RuleFit.
Similar trend is observed for LightGBM, but here in some cases our method was faster than RuleFit (e.g., autism, heart and voting).

\begin{table}[]
    \centering
    \caption{Average running time of generating global explanations, averaged over 5 folds. (Global Explanations)}
    \label{tab:globalruntime}
    \begin{minipage}{\textwidth}
    \begin{tabular}{l r r r r }
    \hline\hline
    Dateset    & \multicolumn{1}{c}{DT\footnote{DT=Decision Tree}+ASP} & \multicolumn{1}{c}{RF\footnote{RF=Random Forest.}+ASP} & \multicolumn{1}{c}{LGBM\footnote{LGBM=LightGBM.}+ASP} & \multicolumn{1}{c}{RuleFit} \\
\hline
    
adult               & 3.07  & 68.24 & 109.08 & 370.17 \\
autism              & 0.01  & 7.09  & 0.00  & 1.80 \\
breast              & 0.06  & 5.11  & 0.30  & 1.82 \\
cars                & 0.18  & 61.59 & 49.52 & 1.79 \\
census              & 13.94 & 569.33 & 60.38 & 2,887.17 \\
compas              & 0.46  & 95.20 & 19.97 & 45.13 \\
credit australia    & 0.01  & 24.87 & 3.14  & 1.76 \\
credit german       & 0.13  & 66.71 & 1.91  & 2.15 \\
credit taiwan       & 0.99  & 123.58 & 481.65 & 250.65 \\
heart               & 0.04  & 73.84 & 1.28  & 3.21 \\
ionosphere          & 0.03  & 12.52 & 0.80  & 1.78 \\
kidney              & 0.02  & 4.93  & 0.83  & 1.74 \\
krvskp              & 0.16  & 74.80 & 35.03 & 2.50 \\
voting              & 0.04  & 12.22 & 0.12  & 1.67 \\

\hline\hline
\end{tabular}
\vspace{-2\baselineskip}
\end{minipage}
\end{table}

\subsection{Evaluating Local Explanations}\label{sec:localexp}
The purpose of generating local explanations is to provide the user with an explanation for the model's prediction for each predicted instance. Here, we use commonly used metrics \textit{local-precision} and \textit{coverage} as proxy measures for the quality of the explanation.\footnote{In the original Anchors paper, the authors use the term \textit{precision}, but here we add \textit{local-} to distinguish from the more commonly used definition of precision.} 
The local-precision compares the (black-box) model predictions of instances covered by the local explanation and the model prediction of the original instance used to induce the local explanation. 
The coverage is the ratio of instances in the validation set that are covered by the local explanation. These two metrics are in a trade-off relationship, where pursuing high coverage is likely to result in low precision explanation and vice versa. 
Furthermore, we also study the number of conditions in the explanation to measure the conciseness of the extracted rules. Additionally, we will also compare the running time to generate the local explanation.

The experiments were carried out similarly to the global explanation evaluation, except that: (1) we replaced RIPPER and RuleFit with Anchors, (2) instead of using the full validation set, we resampled the validation dataset to generate 100 instances in each cross-validation fold for each dataset to estimate the metrics, to complete the experiments in a reasonable amount of time, and (3) in the ASP encoding, we removed the rule selection criteria to avoid excluding rules that are relevant to the predicted instance. 
We were unable to complete Anchors experiment with the \textit{census} dataset due to limited memory (64 GB) on our machine.
For comparison, we computed direct and sufficient explanations with the PyXAI\footnote{\url{https://www.cril.univ-artois.fr/pyxai/}} library, which internally uses SAT and MaxSAT solvers to compute local explanations \cite{audemardExplanatoryPowerBoolean2022,audemardPreferredAbductiveExplanations2022}.
Our method currently does not include the rule simplification feature. 
Therefore, to maintain a consistent comparison, we also deactivated the rule simplification feature in PyXAI. 
Furthermore, as of writing, PyXAI does not yet support LightGBM classifier, so only results for decision tree and random forests are included.
For the running time comparison, we exclude all data preprocessing, training and tree processing, and focus solely on the time taken to generate local explanations.

\begin{landscape}
\begin{table}
    \caption{Decision Tree Local Explanation Metrics}
    \label{tab:localdt}
    \begin{minipage}{\linewidth}
\begin{tabular}{l rrrr rrrr rrrr rrrr}
\hline\hline
{} & \multicolumn{16}{c}{Decision Tree} \\
{} & \multicolumn{4}{c}{\#Conditions} & \multicolumn{4}{c}{Precision} & \multicolumn{4}{c}{Coverage} & \multicolumn{4}{c}{Run Time(s)} \\ \cmidrule(lr){2-5} \cmidrule(lr){6-9} \cmidrule(lr){10-13} \cmidrule(lr){14-17}
Dataset &  Ours &  Anch.\footnote{Anchors.} &  SATd\footnote{PyXAI, direct explanations.} &  SATs\footnote{PyXAI, sufficient explanations.} &  Ours &  Anch. &  SATd &  SATs &  Ours &  Anch. &  SATd &  SATs &  Ours &  Anch. &  SATd*\footnote{Excluded from the run time comparison since the SAT solver is not involved in computing direct explanations.} &  SATs \\
\hline
adult            &         8.65 &           3.31 &           8.65 &               5.32 &              \textbf{1.0} &               0.98 &                \textbf{1.0} &                    \textbf{1.0} &             0.10 &               0.26 &               0.10 &                   0.33 &          0.02 &            1.90 &          0.0003 &              \textbf{0.0014} \\
autism           &         1.00 &           1.55 &           1.00 &               1.00 &              \textbf{1.0} &               \textbf{1.00} &                \textbf{1.0} &                    \textbf{1.0} &             \textbf{0.62} &               0.11 &               \textbf{0.62} &                   \textbf{0.62} &          0.01 &            0.86 &          0.0001 &              \textbf{0.0003} \\
breast           &         4.32 &           1.48 &           4.17 &               3.00 &              \textbf{1.0} &               \textbf{1.00} &                \textbf{1.0} &                    \textbf{1.0} &             0.38 &               0.38 &               0.38 &                   \textbf{0.45} &          0.01 &            1.04 &          0.0002 &              \textbf{0.0005} \\
cars             &         4.03 &           2.20 &           4.03 &               3.07 &              \textbf{1.0} &               0.99 &                \textbf{1.0} &                    \textbf{1.0} &             0.17 &               \textbf{0.21} &               0.17 &                   \textbf{0.21} &          0.01 &            0.22 &          0.0002 &              \textbf{0.0007} \\
census           &         8.08 &            n/a &           8.08 &               3.12 &              \textbf{1.0} &                n/a &                \textbf{1.0} &                    \textbf{1.0} &             0.20 &                n/a &               0.20 &                   \textbf{0.57} &          0.03 &             n/a &          0.0003 &              \textbf{0.0009} \\
compas           &         4.99 &           3.64 &           5.37 &               3.02 &              \textbf{1.0} &               0.99 &                \textbf{1.0} &                    \textbf{1.0} &             0.08 &               0.10 &               0.08 &                   \textbf{0.34} &          0.02 &            0.27 &          0.0002 &              \textbf{0.0010} \\
credit australia &         1.87 &           1.00 &           1.52 &               1.00 &              \textbf{1.0} &               \textbf{1.00} &                \textbf{1.0} &                    \textbf{1.0} &             0.38 &               \textbf{0.51} &               0.38 &                   \textbf{0.51} &          0.01 &            0.25 &          0.0001 &              \textbf{0.0003} \\
credit german    &         4.32 &           3.08 &           4.38 &               3.04 &              \textbf{1.0} &               0.99 &                \textbf{1.0} &                    \textbf{1.0} &             0.14 &               0.27 &               0.14 &                   \textbf{0.31} &          0.01 &            0.92 &          0.0002 &              \textbf{0.0007} \\
credit taiwan    &         5.69 &           2.07 &           6.67 &               2.91 &              \textbf{1.0} &               \textbf{1.00} &                \textbf{1.0} &                    \textbf{1.0} &             0.15 &               0.52 &               0.15 &                   \textbf{0.61} &          0.01 &            1.48 &          0.0002 &              \textbf{0.0008} \\
heart            &         2.60 &           1.85 &           3.24 &               2.37 &              \textbf{1.0} &               \textbf{1.00} &                \textbf{1.0} &                    \textbf{1.0} &             0.23 &               0.31 &               0.23 &                   \textbf{0.40} &          0.01 &            0.33 &          0.0001 &              \textbf{0.0004} \\
ionosphere       &         3.63 &           2.85 &           3.88 &               3.27 &              \textbf{1.0} &               \textbf{1.00} &                \textbf{1.0} &                    \textbf{1.0} &             0.37 &               0.04 &               0.37 &                   \textbf{0.44} &          0.01 &            1.34 &          0.0002 &              \textbf{0.0004} \\
kidney           &         2.84 &           1.69 &           2.49 &               1.83 &              \textbf{1.0} &               \textbf{1.00} &                \textbf{1.0} &                    \textbf{1.0} &             0.30 &               0.10 &               0.30 &                   \textbf{0.41} &          0.01 &            0.88 &          0.0001 &              \textbf{0.0003} \\
krvskp           &         4.78 &           2.62 &           4.78 &               3.66 &              \textbf{1.0} &               0.99 &                \textbf{1.0} &                    \textbf{1.0} &             0.13 &               \textbf{0.17} &               0.13 &                   0.15 &          0.01 &            1.20 &          0.0002 &              \textbf{0.0007} \\
voting           &         2.45 &           1.22 &           2.45 &               1.79 &              \textbf{1.0} &               0.99 &                \textbf{1.0} &                    \textbf{1.0} &             0.33 &               \textbf{0.50} &               0.33 &                   0.48 &          0.01 &            0.37 &          0.0001 &              \textbf{0.0004} \\
\hline\hline
\end{tabular}
\vspace{-2\baselineskip}
\end{minipage}
\end{table}
\end{landscape}

\begin{landscape}
\begin{table}
    \caption{Random Forest Local Explanation Metrics}
    \label{tab:localrf}
    \begin{minipage}{\linewidth}
\begin{tabular}{l rrrr rrrr rrrr rrrr}
\hline\hline
{} & \multicolumn{16}{c}{Random Forest} \\
{} & \multicolumn{4}{c}{\#Conditions} & \multicolumn{4}{c}{Precision} & \multicolumn{4}{c}{Coverage} & \multicolumn{4}{c}{Run Time(s)} \\ \cmidrule(lr){2-5} \cmidrule(lr){6-9} \cmidrule(lr){10-13} \cmidrule(lr){14-17}
Dataset &  Ours &  Anch.\footnote{Anchors.} &  SATd\footnote{PyXAI, direct explanations.} &  SATs\footnote{PyXAI, sufficient explanations.} &  Ours &  Anch. &  SATd &  SATs &  Ours &  Anch. &  SATd &  SATs &  Ours &  Anch. &  SATd*\footnote{Excluded from the run time comparison since the SAT solver is not involved in computing direct explanations.} &  SATs \\
\hline
adult            &         6.37 &           3.07 &         107.51 &              18.80 &             0.77 &               0.99 &               \textbf{1.00} &                   0.99 &             0.14 &               \textbf{0.36} &               0.01 &                   0.03 &          \textbf{0.15} &            5.43 &            0.01 &                0.81 \\
autism           &         3.75 &           2.45 &          62.42 &              11.19 &             0.78 &               \textbf{1.00} &               \textbf{1.00} &                   \textbf{1.00} &             \textbf{0.17} &               0.08 &               0.02 &                   0.11 &          \textbf{0.07} &           12.52 &            0.00 &                0.11 \\
breast           &         3.91 &           2.03 &          65.35 &              23.30 &             0.71 &               \textbf{1.00} &               \textbf{1.00} &                   \textbf{1.00} &             \textbf{0.32} &               0.28 &               0.03 &                   0.09 &          \textbf{0.08} &           25.64 &            0.01 &                0.08 \\
cars             &         6.28 &           2.23 &          20.95 &               4.85 &             0.88 &               0.99 &               \textbf{1.00} &                   0.99 &             0.06 &               \textbf{0.21} &               0.01 &                   0.19 &          \textbf{0.25} &            5.89 &            0.01 &                0.39 \\
census           &         7.86 &            n/a &         241.58 &              10.80 &             0.99 &                n/a &               \textbf{1.00} &                   \textbf{1.00} &             \textbf{0.27} &                n/a &               0.01 &                   0.12 &          \textbf{0.26} &             n/a &            0.02 &               15.70 \\
compas           &         3.74 &           2.88 &         146.55 &              33.15 &             0.78 &               0.98 &               \textbf{1.00} &                   \textbf{1.00} &             \textbf{0.20} &               0.12 &               0.01 &                   0.02 &          \textbf{0.27} &            4.04 &            0.01 &                1.45 \\
credit australia &         3.17 &           2.51 &         139.14 &              38.25 &             0.71 &               0.99 &               \textbf{1.00} &                   \textbf{1.00} &             \textbf{0.25} &               0.22 &               0.02 &                   0.09 &          \textbf{0.10} &           15.89 &            0.00 &                0.12 \\
credit german    &         6.32 &           4.83 &         167.86 &              71.99 &             0.92 &               \textbf{1.00} &               \textbf{1.00} &                   \textbf{1.00} &             0.03 &               \textbf{0.07} &               0.01 &                   0.01 &          \textbf{0.20} &           17.57 &            0.01 &                3.12 \\
credit taiwan    &         7.47 &           1.57 &         479.19 &              42.56 &             0.92 &               0.99 &               \textbf{1.00} &                   \textbf{1.00} &             0.07 &               \textbf{0.61} &               0.01 &                   0.04 &          \textbf{0.20} &            5.00 &            0.02 &                2.30 \\
heart            &         2.22 &           2.89 &         110.21 &              29.06 &             0.71 &               \textbf{0.99} &               \textbf{0.99} &                   \textbf{0.99} &             \textbf{0.21} &               0.14 &               0.04 &                   0.07 &          0.23 &           17.93 &            0.00 &                \textbf{0.22} \\
ionosphere       &         3.95 &           2.94 &         314.19 &             126.37 &             0.81 &               \textbf{1.00} &               \textbf{1.00} &                   \textbf{1.00} &             \textbf{0.35} &               0.03 &               0.02 &                   0.04 &          0.26 &           35.40 &            0.01 &                \textbf{0.20} \\
kidney           &         2.61 &           2.31 &          94.42 &              30.40 &             0.80 &               \textbf{1.00} &               \textbf{1.00} &                   \textbf{1.00} &             \textbf{0.35} &               0.05 &               0.02 &                   0.04 &          \textbf{0.07} &           35.60 &            0.00 &                0.08 \\
krvskp           &         6.11 &           3.78 &          65.17 &              17.53 &             0.80 &               \textbf{1.00} &               \textbf{1.00} &                   \textbf{1.00} &             \textbf{0.14} &               0.13 &               0.01 &                   0.05 &          \textbf{0.23} &           20.87 &            0.01 &                3.94 \\
voting           &         2.27 &           2.23 &          44.46 &               9.86 &             0.91 &               \textbf{1.00} &               \textbf{1.00} &                   \textbf{1.00} &             \textbf{0.42} &               \textbf{0.42} &               0.02 &                   0.19 &          \textbf{0.13} &           27.90 &            0.01 &                0.50 \\
\hline\hline
\end{tabular}
\vspace{-2\baselineskip}
\end{minipage}
\end{table}
\end{landscape}

\begin{table}
    \caption{LightGBM Local Explanation Metrics}
    \label{tab:locallgb}
    \begin{minipage}{\linewidth}
\begin{tabular}{l rr rr rr rr}
\hline\hline
{} & \multicolumn{8}{c}{LightGBM} \\
{} & \multicolumn{2}{c}{\#Conditions} & \multicolumn{2}{c}{Precision} & \multicolumn{2}{c}{Coverage} & \multicolumn{2}{c}{Run Time(s)} \\ \cmidrule(lr){2-3} \cmidrule(lr){4-5} \cmidrule(lr){6-7} \cmidrule(lr){8-9}
Dataset &  Ours &  Anch.\footnote{Anchors.} & Ours &  Anch. & Ours &  Anch. & Ours &  Anch. \\
\hline
adult            &           4.94 &            93.60 &               0.87 &                 \textbf{1.00} &               \textbf{0.79} &                 0.01 &            \textbf{0.35} &             27.92 \\
autism           &           1.00 &             1.55 &               \textbf{1.00} &                 \textbf{1.00} &               \textbf{0.62} &                 0.11 &            \textbf{0.01} &              1.02 \\
breast           &           1.41 &            27.26 &               0.90 &                 \textbf{0.99} &               \textbf{0.60} &                 0.05 &            \textbf{0.07} &              6.66 \\
cars             &           1.17 &            20.46 &               0.72 &                 \textbf{1.00} &               \textbf{0.54} &                 0.01 &            \textbf{0.16} &              1.23 \\
census           &           7.57 &              n/a &               \textbf{0.94} &                  n/a &               \textbf{0.94} &                  n/a &            \textbf{0.34} &               n/a \\
compas           &           3.34 &            21.82 &               0.90 &                 \textbf{1.00} &               \textbf{0.42} &                 0.01 &            \textbf{0.13} &              1.36 \\
credit australia &           2.63 &            35.63 &               0.93 &                 \textbf{1.00} &               \textbf{0.27} &                 0.02 &            \textbf{0.08} &              3.19 \\
credit german    &           2.81 &            51.96 &               0.86 &                 \textbf{1.00} &               \textbf{0.31} &                 0.01 &            \textbf{0.05} &              8.38 \\
credit taiwan    &           3.99 &           141.96 &               0.98 &                 \textbf{1.00} &               \textbf{0.34} &                 0.01 &            \textbf{0.29} &             44.96 \\
heart            &           2.40 &            25.54 &               0.96 &                 \textbf{1.00} &               \textbf{0.21} &                 0.03 &            \textbf{0.02} &              2.22 \\
ionosphere       &           3.50 &            31.10 &               0.98 &                 \textbf{1.00} &               \textbf{0.37} &                 0.02 &            \textbf{0.03} &             14.44 \\
kidney           &           2.19 &            40.74 &               0.89 &                 \textbf{1.00} &               \textbf{0.51} &                 0.02 &            \textbf{0.05} &             11.25 \\
krvskp           &           2.52 &            63.89 &               0.75 &                 \textbf{1.00} &               \textbf{0.51} &                 0.01 &            \textbf{0.28} &             12.25 \\
voting           &           2.06 &            14.78 &               \textbf{1.00} &                 0.98 &               \textbf{0.44} &                 0.05 &            \textbf{0.02} &              3.96 \\
\hline\hline
\end{tabular}
\vspace{-2\baselineskip}
\end{minipage}
\end{table}

\subsubsection{Number of Conditions in Rules}\label{sec:localnumcondition}

Similarly to the evaluation of global explanations (Section \ref{sec:global_num_literals}), we evaluate the number of conditions in local explanations in this section. 
The average number of conditions in rules are listed in Tables \ref{tab:localdt}, \ref{tab:localrf}, \ref{tab:locallgb}.
For the Decision Tree (Table \ref{tab:localdt}) and Random Forest (Table \ref{tab:localrf}), the Anchors produced rules with smaller number of conditions on average overall compared to our method.
As for the LightGBM, the Anchors produced rules with significantly larger number of conditions than our method, and often there was an order of magnitude of difference in the number of rules.
It is possible that the precision guarantee of the Anchors required the algorithm to produce more specific rules, as also indicated by the long run time especially on the datasets that produced the longest rules (e.g., \textit{census, credit taiwan} and \textit{adult}).
This result shows that depending on the underlying learning algorithm, our method can produce shorter and more concise explanations compared to the Anchors.

It is interesting to note that, while PyXAI's direct explanations performed almost identically to ours for Decision Tree, it produced much larger rules for Random Forest.
Similarly, PyXAI's sufficient explanations produced more conditions in the explanations compared to our method or Anchors. 
It is possible that rule simplification could help reduce the number of conditions in such cases.

\subsubsection{Local-Precision and Coverage}\label{sec:localprecision}

The average local-precision, averaged over 5 cross-validation folds, is reported in Tables \ref{tab:localdt}, \ref{tab:localrf}, \ref{tab:locallgb}. 
Note that while Anchors has a minimum precision threshold (we used the default 0.95 setting), ours does not, and indeed we see that all Anchors explanations have higher local-precision than the threshold. 
PyXAI produced the most precise explanations out of the three methods compared, and both direct and sufficient explanations almost always had the perfect local-precision of 1.
The Decision Tree will always have exactly one rule that is relevant to the prediction; therefore, we expect to see exactly 1 local-precision using our method. For the Random Forest and LightGBM, our method produced local explanations with local-precision in 0.8-0.9 range for most of the datasets, but Anchors' explanations had higher local-precision in most cases.

The average coverage, averaged over 5 cross-validation folds, is reported in Tables \ref{tab:localdt}, \ref{tab:localrf}, \ref{tab:locallgb}. 
Interestingly, when using simpler models such as the Decision Tree and Random Forest, Anchors can produce rules that have relatively high coverage, but the pattern does not hold when using a more complex model, which in our case is LightGBM. 
With LightGBM, our method consistently outperformed Anchors in terms of coverage in all datasets, except for the \textit{census} dataset, which we could not run.
For Random Forest, PyXAI's direct explanations are much more precise and apply to a smaller number of instances compared to our method.
Sufficient explanations from PyXAI tended to show greater coverage compared to direct explanations, which aligns with expectations given their fewer number of conditions.

\subsubsection{Local Explanation Running Time}\label{sec:localruntime}

The average running time per instance is reported in Tables \ref{tab:localdt}, \ref{tab:localrf}, \ref{tab:locallgb}. 
For Decision Tree, PyXAI was much faster than both our method and Anchors, whereas for Random Forest, our method was faster than Anchors and PyXAI's sufficient explanations in most datasets. For LightGBM, our method consistently outperformed Anchors in terms of run time.
We also note that our method has a more consistent running time of below 1 second across all datasets, regardless of the complexity of the underlying models, whereas Anchors' running time varies from sub-1 second to tens of seconds, depending on the dataset and model. 
This is likely to be caused by the differences in which these methods query or use information from the original model and generate explanations. 
In fact, a significant amount of time is spent in tree processing in our method, whereas in Anchors the search process is often the most time-consuming step. 
Nonetheless, this comparative experiment demonstrated that our method can produce local explanations in a matter of seconds even when the underlying tree-ensemble is large.

To conclude the experimental section, we summarize the main results obtained in this section. 
For global explanations, we analyzed (1) the average size of generated rule sets and compared it against known methods, as a proxy measure for the degree of simplifications, (2) the relative performance of the rule sets and compared it against known methods, as a proxy measure for the relevance of the explanations, (3) the fidelity of the explanations and (4) the effect of modifying the ASP encoding on the precision metric of the explanations.
Overall, our method was shown to be able to produce smaller rule sets compared to the known methods, however, in terms of the relevance and fidelity of the rules, RuleFit performed better in most cases, demonstrating the trade-off relationship between the complexity of the explanations and performance.

For local explanations, we compared (1) number of conditions, (2) local-precision, (3) coverage and (4) running time of our method against Anchors and PyXAI. 
In terms of local-precision, although our method could produce explanations with reasonably high precision (0.8-0.9 range), Anchors and PyXAI performed better overall.
As for coverage, we found that explanations generated by our method can cover more examples for tree-ensemble. 
Regarding running time, our method had a consistent running time of less than 1 second, whereas the running time of Anchors varied between datasets. 
The experiments for local explanations also highlight the differences between our method and Anchors: while Anchors can produce high-precision rules, our method has an advantage in terms of memory requirement and consistent running time.

\section{Related Works}\label{sec:relatedwork}

Summarizing tree-ensemble models has been studied in literature, see for example, Born Again Trees \cite{breimanBornAgainTrees1996}, defragTrees \cite{haraMakingTreeEnsembles2018} and inTrees \cite{dengInterpretingTreeEnsembles2019}. While exact methods and implementations differ among these examples, a popular approach to tree-ensemble simplification is to create a simplified decision tree model that approximates the behavior of the original tree-ensemble model. Depending on how the approximate tree model is constructed, this could lead to a deeper tree with an increased number of conditions, which makes them difficult to interpret.

Integrating association rule mining and classification is also known, e.g., Class Association Rules (CARs) \cite{liuIntegratingClassificationAssociation1998}, where association rules discovered by pattern mining algorithms are combined to form a classifier. Repeated Incremental Pruning to Produce Error Reduction (RIPPER) \cite{COHEN1995115} was proposed as an efficient approach for classification based on association rule mining, and it is a well-known rule-based classifier. In CARs and RIPPER, rules are mined from data with dedicated association rule mining algorithms, then processed to produce a final classifier.

Interpretable classification models is another area of active research.  Interpretable Decision Sets (IDS) \cite{lakkarajuInterpretableDecisionSets2016} are learned through an objective function, which simultaneously optimizes accuracy and interpretability of the rules. In Scalable Bayesian Rule Lists (SBRL) \cite{yangScalableBayesianRule2017}, probabilistic IF-THEN rule lists are constructed by maximizing the posterior distribution of rule lists. In RuleFit \cite{friedmanPredictiveLearningRule2008}, a sparse linear model is trained with rules extracted from tree-ensembles. RuleFit is the closest to our work in this regard, in the sense that both RuleFit and our method extract conditions and rules from tree-ensembles, but differ in the treatment of rules and representation of final rule sets. In RuleFit, rules are accompanied by regression coefficients, and it is left up to the user to further interpret the result.

\citeN{lundbergLocalExplanationsGlobal2020} showed how a variant of SHAP \cite{lundberg2017unified}, which is a post-hoc explanation method, can be applied to tree-ensembles. 
While our method does not produce importance measures for each feature, the information about which rule fired to reach the prediction can be offered as an explanation in a human-readable format. \citeN{shakerinInductionNonMonotonicLogic2019} proposed a method to use LIME weights \cite{ribeiroWhyShouldTrust2016} as a part of learning heuristics in inductive learning of default theories. Anchors \cite{ribeiroAnchorsHighprecisionModelagnostic2018} generates a single high-precision rule as a local explanation with probabilistic guarantees. 
It should be noted that both LIME and Anchors require the features to be discretized, while recent tree-ensemble learning algorithms can work with continuous features. 
Furthermore, instead of learning rules with heuristics from data, our method directly handles rules which exist in decision tree models with an answer set solver.

There are existing ASP encodings of pattern mining algorithms, e.g., \cite{jarvisaloItemsetMiningChallenge2011a,DBLP:conf/ijcai/GebserGQ0S16,paramonovHybridASPbasedApproach2019}, that can be used to mine itemsets and sequences. Here, we develop and apply our encoding on rules to extract explanatory rules from tree-ensembles. 
On the surface, our problem setting (Section \ref{sec:problemstatement}) may appear similar to frequent itemset and sequence mining; however, rule set generation is different from these pattern mining problems. 
We can indeed borrow some ideas from frequent itemset mining for encoding; however, our goal is not to decompose rules (cf. transactions) into individual conditions (cf. items) then construct rule sets (cf. itemsets) from conditions, but rather to treat each rule in its entirety then combining rules to form rule sets. 
The body (antecedent) of a rule can also be seen as a sequence, where the conditions are connected by conjunction connective \(\wedge\), however, in our case, the ordering of conditions does not matter, thus sequential mining encodings that use slots to represent positional constraints \cite{DBLP:conf/ijcai/GebserGQ0S16} cannot be applied directly to our problem.

Solvers other than ASP solvers have been utilized for similar tasks.
For example, \citeN{yuComputingOptimalDecision2020} proposed SAT- and MaxSAT-based approaches to minimize the total number of conditions used in the target decision set.
Their approaches construct interpretable decision sets based on SAT- and MaxSAT-encodings, instead of using a weighted objective function  \cite{lakkarajuInterpretableDecisionSets2016} that contains multiple terms such as coverage, number of rules and conditions.
\citeN{chenRobustnessVerificationTreebased2019} proposed an efficient algorithm for the robustness verification of tree-ensemble models, which surpasses existing MILP (mixed integer linear programming) methods in terms of speed. 
While they do not consider (local) explanations explicitly in their setting, their method allows computation of anchor features such that changes outside these features cannot change the prediction.
More recently, alternative methods to generate explanations based on logical definitions have been proposed \cite{izzaExplainingRandomForests2021,ignatievUsingMaxSATEfficient2022,audemardExplanatoryPowerBoolean2022,audemardPreferredAbductiveExplanations2022}.
These methods focus on local explanations conforming to logical conditions, such as abductive (sufficient) explanations. 
Similar to our approach, these methods are tailored for tree-ensemble models, and utilize SAT and MaxSAT solvers for efficient processing.
On the other hand, our method differs from these methods by using heuristics for generating explanations, and, by leveraging the flexibility of ASP, facilitates the inclusion of user-defined selection criteria and preferences.
Regarding model-agnostic explanation methods, while Anchors is model-agnostic, its reliance on sampling to construct explanations often results in a longer run time, as exemplified by experimental results reported in Section \ref{sec:localexp}.

\citeN{gunsItemsetMiningConstraint2011a} applied constraint programming (CP), a declarative approach, to itemset mining. This constraint satisfaction perspective led to the development of ASP encoding of pattern mining  \cite{jarvisaloItemsetMiningChallenge2011a,guyetUsingAnswerSet}. \citeN{DBLP:conf/ijcai/GebserGQ0S16} applied preference handling to sequential pattern mining, and \citeN{paramonovHybridASPbasedApproach2019} extended the declarative pattern mining by incorporating dominance programming (DP) from \citeN{negrevergneDominanceProgrammingItemset2013} to the specification of constraints. \citeN{paramonovHybridASPbasedApproach2019} proposed a hybrid approach where the solutions are effectively screened first with dedicated algorithms for pattern mining tasks, then declarative ASP encoding is used to extract condensed patterns. While the aforementioned works focused on extracting interesting patterns from transaction or sequence data, our focus in this paper is to generate rule sets from tree-ensemble models to help users interpret the behavior of machine learning models. As for the ASP encoding, we use dominance relations similar to the ones presented in \citeN{paramonovHybridASPbasedApproach2019} to further constrain the search space.

\section{Conclusion}\label{sec:conclusion}
In this work, we presented a method for generating rule sets as global and local explanations from tree-ensemble models using pattern mining techniques encoded in ASP. Unlike other explanation methods that focus exclusively on either global or local explanations, our two-step approach allows us to handle both global and local explanation tasks. 
We showed that our method can be applied to two well-known tree-ensemble learning algorithms, namely Random Forest and LightGBM. Evaluation on various datasets demonstrated that our method can produce explanations with good quality in a reasonable amount of time, compared to existing methods.

Adopting the declarative programming paradigm with ASP allows the user to take advantage of the expressiveness of ASP in representing constraints and optimization criteria. 
This makes our approach particularly suitable for situations where fast prototyping is required, since changing the constraint and optimization settings require relatively low effort compared to specialized pattern mining algorithms. 
Useful explanations can be generated using our approach, and combined with the expressive ASP encoding, we hope that our method will help the users of tree-ensemble models to better understand the behavior of such models.

A limitation of our method in terms of scalability is the size of search space, which is exponential in the number of valid rules.
When the number of candidate rules is large, we suggest using stricter individual rule constraints on the rules, or reducing the maximum number of rules to be included into rule sets (Section \ref{sec:encodingconstraints}), to achieve reasonable solving time.
Another limitation is the lack of rule simplification in the generation of explanations, since more straightforward rules could enhance the user's comprehension.
Furthermore, while the current ASP encoding considers the overlap between rule sets with the same consequent class (Section \ref{sec:optimizingrulesets}), it does not consider the overlaps between the two rule sets with different consequent classes.



There are a number of directions for further research. 
First, while the current work did not modify the conditions in the rules in any way, rule simplification approaches could be incorporated to remove redundant conditions. 
Second, we could extend the current work to support regression problems. 
Third, further research might explore alternative approaches to implement a model-agnostic explanation method, for example, by combining a sampling-based local search strategy with a rule selection component implemented with ASP.
In addition, while the multi-objective optimization approach (Section \ref{sec:optimizingrulesets}) allows for incorporating user desiderata, the fidelity to the original models can still be improved. 
Future works could focus on exploring alternative encodings or additional optimization strategies to better capture the nuances of the original models' decision-making processes, thereby improving the effectiveness of the explanations.
Furthermore, although local and global explanations serve different purposes and may not always align perfectly (Section \ref{sec:rulesetgenerationforglobal}), achieving a certain level of consistency is important for maintaining the credibility of the explanations.
Future research could explore methods to reconcile such differences, thereby creating a more unified model explanation framework.
More generally, in the future, we plan to explore how ASP and modern statistical machine learning could be integrated effectively to produce more interpretable machine learning systems.

\section*{Acknowledgements}
This work has been supported by JSPS KAKENHI Grant Number JP21H04905 and JST CREST Grant Number JPMJCR22D3, Japan.

\clearpage

\appendix

\section{Additional tables}\label{sec:appendix_table}

The accuracy, F1 scores, precision and recall of the base models after hyperparameter optimization are shown in Tables \ref{tab:basemodelaccuracy} and \ref{tab:basemodelprecision}. 
The values in this table are used as the denominators when calculating the performance ratio in Table \ref{tab:performanceratio}.

\begin{table}[ht]
    \centering
    \caption{Base model accuracy and F1 scores, averaged over 5 folds.}
    \label{tab:basemodelaccuracy}
    \begin{minipage}{\textwidth}
    \begin{tabular}{l rrrrr rrrrr }
    \hline\hline
{} & \multicolumn{5}{c}{Accuracy} & \multicolumn{5}{c}{F1 score} \\ \cmidrule(lr){2-6} \cmidrule(lr){7-11}
model            &     DT\footnote{DecisionTree} &  RF\footnote{RandomForest} & LGBM\footnote{LightGBM} & R.Fit\footnote{RuleFit} & Rip.\footnote{RIPPER} &  DT &  RF & LGBM & R.Fit & Rip. \\ 
\midrule
adult            &     0.86 &  0.85 &   0.87 &      0.86 &     0.84 &  0.66 &  0.61 &   0.71 &      0.69 &     0.63 \\
autism           &     1.00 &  1.00 &   1.00 &      1.00 &     1.00 &  1.00 &  0.99 &   1.00 &      1.00 &     1.00 \\
breast           &     0.93 &  0.97 &   0.97 &      0.95 &     0.94 &  0.90 &  0.96 &   0.95 &      0.93 &     0.91 \\
cars             &     0.97 &  0.98 &   1.00 &      0.99 &     0.93 &  0.95 &  0.97 &   1.00 &      0.99 &     0.89 \\
census           &     0.95 &  0.94 &   0.96 &      0.96 &     0.95 &  0.43 &  0.05 &   0.60 &      0.56 &     0.50 \\
compas           &     0.78 &  0.80 &   0.80 &      0.80 &     0.79 &  0.57 &  0.55 &   0.58 &      0.59 &     0.59 \\
credit australia &     0.86 &  0.87 &   0.87 &      0.87 &     0.85 &  0.85 &  0.85 &   0.85 &      0.85 &     0.84 \\
credit german    &     0.70 &  0.76 &   0.76 &      0.72 &     0.73 &  0.42 &  0.44 &   0.50 &      0.52 &     0.50 \\
credit taiwan    &     0.81 &  0.82 &   0.82 &      0.82 &     0.82 &  0.46 &  0.43 &   0.47 &      0.47 &     0.46 \\
heart            &     0.76 &  0.83 &   0.78 &      0.82 &     0.78 &  0.71 &  0.79 &   0.74 &      0.79 &     0.74 \\
ionosphere       &     0.88 &  0.93 &   0.94 &      0.92 &     0.88 &  0.91 &  0.95 &   0.95 &      0.94 &     0.91 \\
kidney           &     0.95 &  1.00 &   0.99 &      1.00 &     0.97 &  0.96 &  1.00 &   1.00 &      1.00 &     0.97 \\
krvskp           &     0.99 &  0.97 &   0.99 &      0.99 &     0.99 &  0.99 &  0.97 &   0.99 &      0.99 &     0.99 \\
voting           &     0.97 &  0.96 &   0.96 &      0.95 &     0.95 &  0.97 &  0.97 &   0.97 &      0.96 &     0.96 \\

\hline\hline
\end{tabular}
\vspace{-2\baselineskip}
\end{minipage}
\end{table}

\begin{table}[htb]
    \centering
    \caption{Base model precision and recall, averaged over 5 folds.}
    \label{tab:basemodelprecision}
    \begin{minipage}{\textwidth}
    \begin{tabular}{l rrrrr rrrrr }
    \hline\hline
{} & \multicolumn{5}{c}{Precision} & \multicolumn{5}{c}{Recall} \\ \cmidrule(lr){2-6} \cmidrule(lr){7-11}
model            &     DT &  RF & LGBM & R.Fit & Rip. &  DT &  RF & LGBM & R.Fit & Rip. \\ 
\midrule
adult            &      0.77 &  0.82 &   0.78 &      0.77 &     0.73 &   0.58 &  0.49 &   0.66 &      0.62 &     0.55 \\
autism           &      1.00 &  1.00 &   1.00 &      1.00 &     1.00 &   1.00 &  0.99 &   1.00 &      1.00 &     1.00 \\
breast           &      0.92 &  0.94 &   0.94 &      0.93 &     0.94 &   0.88 &  0.97 &   0.96 &      0.93 &     0.89 \\
cars             &      0.95 &  0.95 &   1.00 &      0.99 &     0.87 &   0.95 &  0.99 &   1.00 &      0.99 &     0.92 \\
census           &      0.69 &  0.97 &   0.75 &      0.74 &     0.69 &   0.32 &  0.03 &   0.50 &      0.46 &     0.39 \\
compas           &      0.62 &  0.71 &   0.69 &      0.68 &     0.64 &   0.53 &  0.45 &   0.50 &      0.53 &     0.55 \\
credit australia &      0.79 &  0.85 &   0.86 &      0.85 &     0.81 &   0.93 &  0.86 &   0.84 &      0.86 &     0.87 \\
credit german    &      0.50 &  0.71 &   0.64 &      0.55 &     0.56 &   0.37 &  0.32 &   0.42 &      0.50 &     0.45 \\
credit taiwan    &      0.65 &  0.69 &   0.67 &      0.67 &     0.66 &   0.35 &  0.31 &   0.36 &      0.36 &     0.36 \\
heart            &      0.74 &  0.83 &   0.79 &      0.83 &     0.78 &   0.69 &  0.77 &   0.70 &      0.76 &     0.71 \\
ionosphere       &      0.90 &  0.92 &   0.93 &      0.91 &     0.90 &   0.91 &  0.98 &   0.98 &      0.97 &     0.92 \\
kidney           &      0.95 &  1.00 &   1.00 &      1.00 &     0.97 &   0.96 &  1.00 &   0.99 &      1.00 &     0.98 \\
krvskp           &      0.98 &  0.96 &   0.99 &      0.99 &     0.99 &   1.00 &  0.98 &   1.00 &      0.99 &     0.99 \\
voting           &      0.99 &  0.98 &   0.98 &      0.96 &     0.97 &   0.96 &  0.95 &   0.95 &      0.96 &     0.95 \\

\hline\hline
\end{tabular}
\vspace{-2\baselineskip}
\end{minipage}
\end{table}

\section{Hyperparameter optimization}\label{sec:appendix_hyperparameter}
All hyperparameters were optimized using \textit{optuna} \cite{akibaOptunaNextgenerationHyperparameter2019}. 
As evaluation metric, we chose the F1 score. 
Hyperparameter tuning was performed separately for each fold on the training data. 
Within each fold, we used 20\% of the training data as the validation set. 
We used early stopping for hyperparameter search, where the search was terminated after 30 trials if the validation metric did not improve for 30 consecutive rounds. 
We set the maximum number of search trials to 200, and the time-out for each study was set to 1,200 seconds. 
The search range for each model is shown in Table \ref{tab:hyperparameter}.  

\begin{table}[!h]
    \centering
    \caption{Search space definition for hyperparameter optimization}
    \label{tab:hyperparameter}
    \begin{tabular}{llll}
    \toprule
    Model/Parameter & Type & Value Range & Step \\
    \midrule
    Decision Tree & \\
    \midrule
    max\_depth    & integer & [2, 9] & \\
    min\_samples\_leaf  & float & [0.01, 0.2] & \\
    min\_weight\_fraction\_leaf & float & [0.0, 0.5] & 0.01 \\
    criterion & categorical & [gini, entropy] & \\
    \midrule
    Random Forest & \\
    \midrule
    n\_estimators & integer & [50, 500] & 10 \\
    max\_depth    & integer & [2, 9] \\
    min\_samples\_leaf  & float & [0.01, 0.2] & \\
    min\_weight\_fraction\_leaf & float & [0.0, 0.5] & 0.01 \\
    criterion & categorical & [gini, entropy] & \\
    \midrule
    LightGBM \\
    \midrule
    objective (fixed)   & categorical   & binary            &\\
    metric (fixed)      & categorical   & binary logloss    &\\
    num\_boost\_round (fixed) & integer       & 1000              &\\
    early\_stopping (fixed)   & integer       & 30    &\\
    learning\_rate      & float         & [0.01, 0.2] &  \\
    max\_depth          & integer       & [2, 9] \\
    num\_leaves         & integer       & [2, 100] \\
    min\_data\_in\_leaf & integer       & [10, 500] & 10 \\
    min\_child\_weight  & float         & [0.001, 10] &  \\
    feature\_fraction   & float         & [0.05, 1.0] &  \\
    subsample           & float         & [0.2, 1.0]    &  \\
    subsample\_freq     & int           & [1, 20] & \\
    lambda\_l1          & float         & [1e-5, 10]    &  \\
    lambda\_l2          & float         & [1e-5, 10]    &  \\
    \midrule
    RuleFit \\
    \midrule
    rule\_generator & categorical & random forest \\
    memory\_parameter & float & [0.0, 1.0] & 0.1 \\
    lin\_standardise & boolean \\
    lin\_trim\_quantile & boolean \\
    \midrule
    RIPPER \\
    \midrule
    num\_folds & integer & [2, 5] & 1 \\
    prune & boolean \\
    no\_error\_check & boolean &\\
    \bottomrule
    \end{tabular}
\end{table}

\clearpage


\bibliographystyle{acmtrans}
\bibliography{tplp_treetap}

\label{lastpage}
\end{document}